\newcommand{\ith}[2]{\ensuremath{{#1}^{\mbox{\scriptsize #2}}}}
\newcommand{\term}[1]{\emph{#1}}
\newcommand{\eg}[0]{\emph{e.g.},\xspace}
\newcommand{\ie}[0]{\emph{i.e.},\xspace}
\newcommand{\etal}[0]{\emph{et al.}\xspace}
\newcommand{\cf}[0]{\emph{cf.}\xspace}
\newcommand{\VC}[1]{\ensuremath{\mathrm{VC}(#1)}}
\newcommand{\proj}[2]{\ensuremath{p_{#1}(#2)}}
\newcommand{\indic}[1]{\ensuremath{\mathbf{1}\left[#1\right]}}
\newcommand{\myparagraph}[1]{\textbf{#1}.}
\begin{document}

\title*{Bounding Embeddings of VC Classes into Maximum Classes}
% Use \titlerunning{Short Title} for an abbreviated version of
% your contribution title if the original one is too long
\author{J.~Hyam Rubinstein, Benjamin~I.~P. Rubinstein, and Peter~L. Bartlett}
\authorrunning{Rubinstein, Rubinstein, Bartlett}
\institute{J.~Hyam Rubinstein \at Department of Mathematics \& Statistics, The University of Melbourne, Australia\\ \email{rubin@ms.unimelb.edu.au}
\and Benjamin~I.~P. Rubinstein \at Department of Computing \& Information Systems, The University of Melbourne, Australia\\
\email{ben@bipr.net}
\and Peter~L. Bartlett \at Depts. Electrical Engineering \& Computer Sciences and Statistics, UC Berkeley, USA\\
Faculty of Science and Engineering, Queensland University of Technology, Australia\\
\email{bartlett@cs.berkeley.edu}}

\maketitle

% first for online publication
\abstract*{
One of the earliest conjectures in computational learning theory---the Sample Compression conjecture---asserts that concept classes (equivalently set systems) admit compression schemes of size linear in their VC dimension. To-date this statement is known to be true for maximum classes---those that possess maximum cardinality for their VC dimension. The most promising approach to positively resolving the conjecture is by embedding general VC classes into maximum classes without super-linear increase to their VC dimensions, as such embeddings would extend the known compression schemes to all VC classes. We show that maximum classes can be characterised by a local-connectivity property of the graph obtained by viewing the class as a cubical complex. This geometric characterisation of maximum VC classes is applied to prove a negative embedding result which demonstrates VC-$d$ classes that cannot be embedded in any maximum class of VC dimension lower than $2d$. On the other hand, we show that every VC-$d$ class $C$ embeds in a VC-$(d+D)$ maximum class where $D$ is the deficiency of $C$, \ie the difference between the cardinalities of a maximum VC-$d$ class and of $C$.  For VC-$2$ classes in binary $n$-cubes for $4 \le n \le 6$, we give best possible results on embedding into maximum classes.  For some special classes of Boolean functions, relationships with maximum classes are investigated. Finally we give a general recursive procedure for embedding VC-$d$ classes into VC-$(d+k)$ maximum classes for smallest $k$.

}

% second to actually show up in chapter
\abstract{

}

\section{Introduction}

Sauer's Lemma, discovered first by Vapnik \& Chervonenkis~\cite{VC71}
and later independently by Shelah~\cite{SS72} and Sauer~\cite{S72},
upper-bounds the cardinality of a set system in terms of its Vapnik-Chervonenkis (VC) dimension.
The lemma has found many applications in such diverse fields as computational learning theory and empirical process theory~\cite{BEHW89,VC71,BartlettBook,Haussler90probablyapproximately,AngluinSurvey,DevroyeBook,vanDerVaartBook},
coding theory~\cite{Guruswami2010}, computational geometry~\cite{HausslerWelzl86,Matousek94,SetCover95,Kleinberg97}, road network routing~\cite{AbrahamDFGW11}, and automatic
verification~\cite{SetCover95}; in the former it is the avenue through which
the VC dimension enters into generalisation error bounds and the theoretical foundations
of learnability.

Maximum classes are concept classes on
the $n$-cube\footnote{As discussed below, we consider concept classes evaluated on finite samples. Such projections are equivalent to subsets of the $n$-cube. Thus we discuss concept classes as such subsets without loss of generality.} that meet Sauer's Lemma with equality~\cite{W87,KW07}: they
maximise cardinality over all concept classes with a given VC dimension.
Recent work has illuminated a beautiful geometric structure to maximum
classes, one in which such classes (and their complements) can be
viewed as complete collections of cubes---unions of ${n \choose d}$
cubes each varying over a unique set of $d$ coordinates---which
forms a $d$-contractible simplicial complex (the higher-order cubical
generalisation of a tree)~\cite{RBR08}. Another important family of
concept classes are known as maximal classes, which cannot be expanded
without increasing their VC dimension~\cite{W87,KW07}; the complement of
any maximal VC-$d$ class is also a complete collection of $(n-d-1)$-cubes~\cite{RBR08}. Indeed it is most natural to study the complementary structure
of VC classes due to these cubical characterisations.

Our key motivation for studying maximal and maximum classes
is for resolving the Sample Compression conjecture~\cite{LW86,W03}, 
a problem that has evaded resolution for over a quarter century, and that 
equates learnability with a concept class admitting a so-called finite
compression scheme. Littlestone \& Warmuth~\cite{LW86}, after showing that
finite compression schemes immediately lead to risk bounds, posed the conjecture
to determine whether the converse holds: does finite VC dimension imply
$O(d)$-sized compression schemes.
Beyond providing a deeper understanding of the fundamental notions of learning theory, such
as VC dimension, maximum and maximal classes, foundational work on the
Sample Compression conjecture may lead to practical learning algorithms.
Previously, compression-based learning algorithms~\cite{LBS04} and bounds~\cite{Lang05}
have enjoyed successful application in practice.

To date, most progress towards the conjecture has been on compressing maximum
classes. Floyd~\cite{F89} first compressed maximum classes with labeled schemes.
Later Ben-David \& Litman~\cite{BDL98} proved existence of unlabeled schemes for maximum
classes, followed by Kuzmin \& Warmuth~\cite{KW07} and Rubinstein \& Rubinstein~\cite{Rubinsteins09} who constructed unlabeled schemes using the
cubical structure of such classes. In the related problem of teaching,
Doliwa \etal~\cite{Doliwa2010} showed that the recursive teaching dimension of maximum classes
coincides with the VC dimension, using the cubical corner-peeling compression
scheme of Rubinstein \& Rubinstein~\cite{Rubinsteins09}.
Recently Livni \& Simon~\cite{honest} developed a new approach using ideas from
model theory to form bounded-size compression schemes for a new family of concept classes.
It is unclear, however, how to directly extend
any of these results to $O(d)$ schemes for general VC-classes.

To compress general classes it is necessary and sufficient
to compress maximal classes, since any concept class can be expanded to a maximal
class without increasing VC dimension. Given the past success at compressing
maximum classes, a natural approach to the conjecture is
to develop techniques for embedding any maximal class into a maximum class
without significantly increasing its VC dimension~\cite{F89}. This chapter provides
results relating to this approach.

We first discuss a series of higher-dimensional analogs of Sauer's
Lemma for counting $k$-dimensional hypercubes in the complements of general
VC-$d$ classes for $0\leq k< n-d-1$. Where Sauer's Lemma lower bounds
points (the $k=0$ case) in the complement, these higher-dimensional
analogues lower bound edges ($k=1$) all the way up to faces ($k=n-d-2$).
Moreover we show that maximum classes uniquely meet
each higher-dimensional bound with equality, just as in the $k=0$
case. These bounds were first obtained by Kuzmin \& Warmuth~\cite{KW07}. We present a different treatment 
as we are particularly interested in the graph obtained by considering only the incidence
relations of maximal-dimensional cubes along their faces of co-dimension one. 

We view this characterisation of maximum VC classes as providing
a measure of closeness of any VC class---most importantly maximal classes---to
being maximum. Knowing
how close a maximal class is to being maximum may prove to be useful
in achieving the desired maximum-embedding of maximal classes with linear increase to VC dimension.

The deficiency $D$ of a VC-$d$ class $C$ is defined as the difference between the cardinality of a maximum VC-$d$ class and of $C$---clearly maximum classes are precisely those of deficiency $0$. We prove that classes of small deficiency have useful compression schemes coming from embedding into maximum classes, by establishing that every VC-$d$ class with deficiency $D$ embeds in a maximum class of VC dimension $d+D$. 
There are two interesting steps to show this. The first is that if a VC-$d$ class $C$ projects onto a maximum class, via a projection of the binary $n$-cube to a binary $(n-k)$-cube, then $C$ embeds in a maximum VC-$(d+k)$ class. Secondly, if $C$ is a VC-$d$ class which is not maximum, there is always a projection from the binary $n$-cube to the binary $(n-1)$-cube, which reduces the deficiency of $C$. 

As an application of the characterisation of maximum VC classes, we produce a collection of concept classes of VC-dimension $d$ embedded in 
an $n$-cube, so that each class cannot be embedded in any maximum class of VC-dimension $2d-1$ but can be embedded 
in a maximum class of VC-dimension $2d$. The cubical structure of the complements is the key to the construction. This negative result
improves that of Rubinstein \& Rubinstein~\cite{Rubinsteins09}, where it is shown that for all constants $c$ there exist VC-$d$ classes which cannot be embedded in any maximum class of VC-dimension $d+c$. Our new negative result proves that while the general
Sample Compression conjecture---that every VC-$d$ class has a compression scheme of size $O(d)$---may still
hold, the constant must at least be $2$, if the compression scheme is to be obtained via embeddings.

We also give a recursive scheme to embed any VC-$d$ class into a maximum VC-$(d+k)$ class, if any such embedding exists. The scheme does not resolve the conjecture, because $k$ must be supplied, but rather demonstrates a possible approach to the compression problem, via embedding into maximum classes. The key idea is to use lifting~\cite{RBR08}.

For the special case of VC-$2$ classes in the binary $n$-cube, for $4 \le n \le 6$ we give best-possible results for embedding into maximum classes. Maximal VC-$2$ classes in the binary $4$-cube are classified. For symmetric Boolean functions, we show that there is a natural way of enlarging the class to a maximum class of the same VC dimension. A construction is given for sets of Boolean functions on $n$ variables which give maximum classes in the binary $2^n$-cube. 

\myparagraph{Chapter Organisation} We begin with preliminaries in Sect.~\ref{sec:background}. Our proof bounding the number of hypercubes contained in the complement of a VC class is presented in Sect.~\ref{sec:counting}. We then develop a new characterisation of maximum classes in Sect.~\ref{sec:iterated}. In Sect.~\ref{sec:deficiency}, we prove that every VC-$d$ class embeds in a maximum class of VC dimension $d+D$ where $D$ is the deficiency of the class. Section~\ref{sec:examples} presents examples which demonstrate a new negative result on embedding maximal classes into maximum ones in which their VC dimension must double. Section~\ref{sec:max} gives a general recursive construction of embeddings of VC-$d$ classes into VC-$(d+k)$ maximum classes. In Sect.~\ref{sec:VC-2}, classes of VC dimension $2$ embedded in binary $n$-cubes for $4 \le n \le 6$ are discussed. In Sect.~\ref{sec:functions}, symmetric and Boolean functions are viewed as classes in the binary $2^n$-cube and related to maximum classes. 
Sect.~\ref{sec:conc} concludes the chapter.

\section{Background and Definitions}\label{sec:background}

Consider the binary $n$-cube $\{0,1\}^n$ for integer $n>1$. We call any subset $C\subseteq\{0,1\}^n$ 
a \term{concept class} and elements $c\in C$ \term{concepts}. This terminology derives from statistical learning theory: a binary classifier $f:\mathcal{X} \to \{0,1\}$ on some domain $\mathcal{X}$ (\eg Euclidean space) is equivalent to the $n$-bit vector of its evaluations on a particular sample of points $X_1,\ldots,X_n \in\mathcal{X}$ of interest. Hence on a given sample we equate concepts with such classifiers, and families of classifiers (\eg the linear classifiers) with concept classes. Equivalently a concept class corresponds to a set system with underlying set taken to be the axes (or $n$ points) and each subset corresponding to the support of a concept.

\subsection{Special Concept Classes}

We next outline a number of families of concept classes central to VC theory,
and that exhibit special combinatorial structure. We begin with the important combinatorial parameter known as the VC dimension~\cite{VC71}.

\begin{definition}
The \term{Vapnik-Chervonenkis} (VC) dimension of concept class $C\subseteq\{0,1\}^n$ is defined as
\begin{eqnarray*}
\VC{C} &=& \max\left\{|I| : I\subseteq[n], \proj{I}{C}=\{0,1\}^{|I|}\right\}\enspace,
\end{eqnarray*}
where $\proj{I}{C}=\left\{(c_i)_{i\in I} \mid c\in C\right\}$ is the set of coordinate projections of the concepts of $C$
on coordinates $I\subseteq [n]$.
\end{definition}

In words, the VC dimension is the largest number $d$ of coordinates on which the restriction of the concept class forms the complete binary $d$-cube. The VC dimension is used extensively in statistical learning theory and empirical process theory to measure the complexity of families of classifiers in order to derive risk bounds. It enters into such
results via the following bound on concept class cardinality first due to Vapnik \& Chervonenkis~\cite{VC71}, and later independently by Shelah~\cite{SS72} and Sauer~\cite{S72}.

\begin{lemma}[Sauer's Lemma]
The cardinality of any concept class $C\subseteq\{0,1\}^n$ is bounded by 
\begin{eqnarray*}
|C| &\leq& \sum_{i=0}^{\VC{C}} {n \choose i}\enspace.
\end{eqnarray*}
\end{lemma}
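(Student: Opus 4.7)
My plan is to prove Sauer's Lemma by double induction on $n$ and $d = \VC{C}$, using the standard argument that splits a concept class at a single coordinate. The base cases are immediate: when $d = 0$, the class is a singleton so $|C| \le 1 = \binom{n}{0}$; when $n = d$, the bound collapses to $2^n$, which trivially holds. So the work is in the inductive step, where I assume the result for all pairs $(n',d')$ with $n'+d' < n+d$.

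For the inductive step, I would fix any coordinate—say the last one—and define two auxiliary classes on $\{0,1\}^{n-1}$. First, let $C_1 = \proj{[n-1]}{C}$ be the projection that forgets the last coordinate. Second, let $C_2 = \{c' \in \{0,1\}^{n-1} : (c',0) \in C \text{ and } (c',1) \in C\}$ collect the projected concepts that appear with both completions in $C$. The key counting identity is $|C| = |C_1| + |C_2|$, since every projected concept in $C_1$ accounts for one element of $C$, with an extra element contributed precisely when that projected concept lies in $C_2$.

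Next I would bound the VC dimensions of these auxiliary classes. Clearly $\VC{C_1} \le d$, since any set shattered by $C_1$ is also shattered by $C$. The crucial observation is that $\VC{C_2} \le d-1$: any $I \subseteq [n-1]$ shattered by $C_2$ is shattered by $C$ together with coordinate $n$, as the doubled concepts in $C_2$ automatically realise both values at coordinate $n$. Applying the inductive hypothesis to $C_1$ on $\{0,1\}^{n-1}$ with VC dimension at most $d$, and to $C_2$ on $\{0,1\}^{n-1}$ with VC dimension at most $d-1$, I obtain
\begin{eqnarray*}
|C| \;\le\; \sum_{i=0}^{d} \binom{n-1}{i} + \sum_{i=0}^{d-1} \binom{n-1}{i}\enspace.
\end{eqnarray*}
Pascal's identity $\binom{n-1}{i} + \binom{n-1}{i-1} = \binom{n}{i}$ then collapses this sum to $\sum_{i=0}^{d}\binom{n}{i}$, completing the induction.

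I do not expect a serious obstacle here: the proof is entirely combinatorial, and the only subtle point is the VC-dimension bound for $C_2$, which is a direct unwinding of the shattering definition. An alternative route would be a shifting (compression) argument that transforms $C$ into a downward-closed family with the same cardinality and no larger VC dimension, after which the bound is almost tautological; but the split-at-a-coordinate induction is shorter and fits naturally with the cubical viewpoint taken throughout the chapter.
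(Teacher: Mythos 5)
Your proof is correct. The paper itself states Sauer's Lemma without proof, citing the original sources, so there is no in-paper argument to compare against line by line; the closest the chapter comes is its repeated reliance on the \emph{shifting} proof (the contraction-like mapping mentioned in Sect.~2.1) and on the tail/reduction decomposition of Welzl. Your induction is the other classical route, and it is in fact the same decomposition the paper uses throughout under different names: your $C_1$ is the projection $p_{\overline{x}}(C)$ and your $C_2$ is the reduction $C^x$, with the identity $|C| = |p_{\overline{x}}(C)| + |C^x|$ appearing verbatim in the proof of Proposition~4. All the steps check out: the counting identity, the bound $\VC{C_1}\le d$, the key observation that a set $I$ shattered by $C_2$ forces $I\cup\{n\}$ to be shattered by $C$ (so $\VC{C_2}\le d-1$), and the telescoping via Pascal's identity. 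One small remark on the base cases: the case $n=d$ need not be singled out, since when $d\ge n$ the bound is $2^n$ and the inductive step still goes through (the terms $\binom{n-1}{i}$ with $i>n-1$ vanish); but including it does no harm. The shifting proof you mention as an alternative buys slightly more for this chapter's purposes---it yields the cube-counting refinements of Theorem~2---whereas your induction is shorter and self-contained for the cardinality bound alone.
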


Any concept class that meets Sauer's Lemma with equality is called \term{maximum}, while any concept class that cannot be extended without increasing VC dimension is called \term{maximal}~\cite{W87,KW07}. Trivially maximum classes are maximal by definition, while not all maximal classes are maximum~\cite{Welzl87,KW07}.

A family of ``canonical'' maximum classes, which are particularly convenient to work with, are the fixed points of a certain type of contraction-like mapping known as shifting which is used to prove Sauer's Lemma~\cite{HLW94,RBR08}.\footnote{We use $\indic{p}$ to denote the indicator function on predicate $p$, and $[n]$ to denote integers $\{1,\ldots,n\}$.}

\begin{definition}
A concept class $C\subseteq\{0,1\}^n$ is called \term{closed-below} if $c\in C$ implies that for every $I\subseteq [n]$ the concept $c_I$, with
$c_{I,i} = \indic{i\in I} c_i$, is also in $C$.
\end{definition}

We can now define the deficiency of any VC-$d$ class. 

\begin{definition}
The \term{deficiency} of a concept class $C\subseteq\{0,1\}^n$ is the difference $D=|C^\star| -|C|$ where $C^\star$ is any maximum class with the same VC dimension as $C$.
\end{definition}

\subsection{Cubical View of VC Classes}
\label{sec:cubical-bg}

Rubinstein \etal~\cite{RBR08} established the following natural geometric characterisations of VC classes, and maximum \& maximal classes in particular. 

\begin{definition}
A collection of subcubes $\mathcal{C}$ of cardinality $n \choose d$ is called \term{$d$-complete} if for all sets $I\subseteq[n]$ of cardinality $d$, there exists a $d$-cube $S_I\in\mathcal{C}$ such that $\proj{I}{S_I}=\{0,1\}^d$.
\end{definition}

\begin{theorem}\label{thm:complement}
$C\subseteq\{0,1\}^n$ has $\VC{C}\leq d$ iff $\overline{C}$ contains a $(n-d-1)$-complete collection of subcubes. In particular $\VC{C}=d$ iff $\overline{C}$ contains a $(n-d-1)$-complete collection but no $(n-d)$-complete collection. It follows that $C\subseteq\{0,1\}^n$ of VC-dimension $d$ is maximal iff $\overline{C}$ is a $(n-d-1)$-complete collection and properly contains no $(n-d-1)$-complete collection; and $C\subseteq\{0,1\}^n$ of VC-dimension $d$ is maximum iff $\overline{C}$ is the union of a maximally overlapping $(n-d-1)$-complete collection, or equivalently iff $C$ is the union of a maximally overlapping $d$-complete collection.
\end{theorem}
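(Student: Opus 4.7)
My plan is to peel off the four assertions in order, since each after the first reduces to the base equivalence plus a short supporting argument.

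First, I will establish the equivalence $\VC{C}\leq d$ iff $\overline{C}$ contains an $(n-d-1)$-complete collection by a direct translation of definitions. The condition $\VC{C}\leq d$ is precisely the failure of shattering on every $(d+1)$-subset $J\subseteq [n]$, i.e., for each such $J$ there is a missing value $v_J\in\{0,1\}^{d+1}\setminus p_J(C)$. Each such $v_J$ witnesses the $(n-d-1)$-subcube $\{c\in\{0,1\}^n:c_J=v_J\}$ lying in $\overline{C}$, so indexing by $I=[n]\setminus J$ yields a $(n-d-1)$-complete collection in $\overline{C}$. Conversely, an $(n-d-1)$-complete collection in $\overline{C}$ exhibits a missing projection value for every $(d+1)$-subset, giving $\VC{C}\leq d$. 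The $\VC{C}=d$ statement then follows by applying this equivalence at both $d$ and $d-1$.

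For the maximal case, assuming $\VC{C}=d$, maximality is by definition the assertion that $\VC{C\cup\{c\}}\geq d+1$ for every $c\in\overline{C}$. By the base equivalence (contrapositive), this is in turn equivalent to $\overline{C}\setminus\{c\}$ containing no $(n-d-1)$-complete collection, for every $c\in\overline{C}$. Rephrased across all such $c$, this says exactly that every $(n-d-1)$-complete collection in $\overline{C}$ must have union equal to $\overline{C}$, which unpacks into the two-part condition: $\overline{C}$ is the union of such a collection, and $\overline{C}$ properly contains no such collection.

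For the maximum case, I will work with the cardinality characterisation $|\overline{C}|=2^n-\sum_{i=0}^d\binom{n}{i}$ supplied by Sauer's lemma with equality. Among $(n-d-1)$-complete collections of subcubes, ``maximally overlapping'' should be formalised to mean a collection whose union size is minimal; the main claim is that this minimum equals the maximum-class complement size. I plan to exhibit a canonical closed-below maximum class as the extremal example realising the minimum, and then verify that any collection with smaller overlap would force $|\overline{C}|$ to be strictly larger than the Sauer bound, contradicting the assumed maximum status. The dual formulation in terms of a $d$-complete decomposition of $C$ itself follows by applying the same analysis to the complementary cubical structure, or equivalently to the $d$-cubes spanning a maximum class.

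The hard part will be this final assertion: formalising ``maximally overlapping'' and matching its union size with the Sauer equality is substantially subtler than the other three equivalences, which drop out of the base translation almost immediately. Handling the overlap count cleanly is likely to require inclusion--exclusion tuned to the cubical structure, or direct invocation of the lifting and $d$-cube decomposition machinery developed in \cite{RBR08}, where the canonical bijection between $d$-cubes in a maximum class and $d$-subsets of $[n]$ drives the count.
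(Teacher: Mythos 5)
The paper does not actually prove Theorem~\ref{thm:complement}; it is quoted as background and attributed to Rubinstein \emph{et al.}~\cite{RBR08}, so there is no in-paper argument to compare against. Judged on its own terms, your treatment of the first three assertions is correct and complete: the bijection between unshattered $(d+1)$-sets $J$ with missing pattern $v_J$ and anchored $(n-d-1)$-subcubes of $\overline{C}$ indexed by $I=[n]\setminus J$ is exactly the right translation, and your reformulation of maximality as ``every $(n-d-1)$-complete collection contained in $\overline{C}$ has union equal to $\overline{C}$'' cleanly yields both halves of the stated condition. Your formalisation of ``maximally overlapping'' as minimal union cardinality is also the right one, and the count works: Sauer applied to the complement of the union gives the lower bound $2^n-\sum_{i=0}^{d}\binom{n}{i}$, the closed-below class shows it is attained, and equality of cardinalities forces the union to be all of $\overline{C}$.

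The one place you should be careful is the dual assertion that a maximum $C$ is itself the union of a $d$-complete collection. You cannot get this by ``applying the same analysis to the complementary cubical structure,'' because that route presupposes $\VC{\overline{C}}\leq n-d-1$, which by the first equivalence is precisely the statement that $C$ contains a $d$-complete collection --- the claim being proved. Nor does shattering of a $d$-set $I$ by itself produce a $d$-subcube of $C$ varying over $I$. The missing ingredient is Welzl's theorem that the (iterated) reduction $C^I$ of a maximum class is maximum of VC dimension $0$, hence a single point, which corresponds to exactly one $d$-cube of $C$ spanning $I$; collecting these over all $d$-subsets gives a $d$-complete collection inside $C$, and the same Sauer cardinality comparison you use for $\overline{C}$ shows its union is all of $C$ and is maximally overlapping. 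With that substitution your plan goes through; as written, the final step either defers entirely to~\cite{RBR08} or risks circularity.
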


Due to this characterisation, it is often more convenient to focus on the complementary class $\overline{C}=\{0,1\}^n\backslash C$ of a concept class $C\subseteq\{0,1\}^n$.

Given a class $C\subseteq\{0,1\}^n$ and a projection $p_I$ from the $n$-cube to an $(n-1)$-cube on $[n]\setminus\{x\}$: the \term{tail} of $C$ with respect to $x$ is the subset of $C$ with unique images under $p_I$; the \term{reduction} $C^x$ of $C$ is the projection of the subset of $C$ with non-unique images. Welzl~\cite{W87} (\cf also Kuzmin \& Warmuth~\cite{KW07}) showed that $p_I(C)$ is a maximum class of VC-dimension $d$ while $C^x$ is a maximum class of VC-dimension $d-1$. Moreover $C^x$ is a collection of $(d-1)$-cubes which are faces of the $d$-cubes which make up $p_I(C)$.

We next review a technique due to Rubinstein \& Rubinstein~\cite{Rubinsteins09} for building 
all VC-$d$ maximum classes by starting with a 
closed-below $d$-maximum class and proceeding through
a sequence of $d$-maximum classes (inverting the process of shifting).
Lifting is the process of reconstructing $C$ from the knowledge of the tail $C_T$ and reduction $C_R$. First, we form a new maximum class $C^\prime$ in the $n$-cube by placing all the $d$-cubes with at least one vertex in $C_T$ at the level where the \ith{i}{th} coordinate $x_i=0$ and $C_R$ is used to form $d$-cubes of the form $c \times \{0,1\}$ where $c$ is a $(d-1)$-cube of $C_R$ and $\{0,1\}$ give both choices $x_i =0,1$. Now splitting $C^\prime$ along $C_R \times \{0,1\}$, each connected component of $d$-cubes, each with at least one vertex in $C_T$, is lifted to either the level $x_i=0$ or to the level $x_i=1$. Lifting all the components in this way always produces a maximum class $C$ and all maximum classes are obtained in this way by a series of lifts starting at the closed-below maximum class.

\subsection{The Sample Compression Conjecture}

Littlestone and Warmuth's
\term{Sample Compression conjecture} predicts that any concept class with VC-dimension $d$ admits a so-called compression scheme of size $O(d)$~\cite{LW86,W03}. 

\begin{definition}
Let $k\in\mathbb{N}$, domain $\mathcal{X}$, and family
of classifiers $\mathcal{F}\subseteq\{0,1\}^\mathcal{X}$.
The following pair of mappings $(\kappa_\mathcal{F}, \rho_\mathcal{F})$ is called a
\term{compression scheme} for $\mathcal{F}$ of size $k$
\begin{eqnarray*}
\kappa_\mathcal{F}: && \bigcup_{n=k}^\infty \left(\mathcal{X}\times \{0,1\}\right)^n \to \bigcup_{l=0}^k \left(\mathcal{X}\times \{0,1\}\right)^l \\
\rho_\mathcal{F}: && \left(\bigcup_{l=0}^k \left(\mathcal{X}\times \{0,1\}\right)^l\right)\times \mathcal{X} \to \{0,1\} \enspace,
\end{eqnarray*}
if they satisfy the following condition for each
classifier $f\in\mathcal{F}$ and unlabeled sample $x\in\bigcup_{n=k}^\infty \mathcal{X}^n$: we first
evaluate the \term{compression function} $\kappa_\mathcal{F}$ on $x$ labeled by $f$ to a subsequence $r$ of length at most $k$, called the \term{representative} of $f$; and then the \term{reconstruction function} $\rho_\mathcal{F}(r,\cdot)$ can label $x_i$ consistently with $f(x_i)$ for each $i\in[n]$.
\end{definition}

Floyd~\cite{F89} in 1989 showed that all VC-$d$ maximum classes can be compressed with schemes of size $d$. Since then, little progress has
been made on compressing richer families of VC classes, although unlabeled compression schemes, relations to teaching, and a number of beautiful related combinatorial results have been developed~\cite{BDL98,KW07,RBR08,LBS04,RR08,Rubinsteins09,Doliwa2010,honest}. Since concept classes inherit the compression schemes of larger
classes in which they can be embedded, a leading approach to positively establishing the conjecture is to embed (general) VC classes into maximum classes without significantly increasing VC dimension. In particular, it would be sufficient to embed any $d$-maximal class into an $O(d)$-maximum class.

\section{Bounding the Number of Hypercubes of a VC Class}\label{sec:counting}

As discussed, a natural approach to understanding the content
of a class provided by its VC dimension is via the class's cubical 
structure. In this section we focus on counting the cubes of a VC-class.

The following was established by Kuzmin \& Warmuth~\cite{KW07} via a different argument. 
We will apply this result to proving a new characterisation of maximum
classes in the next section (Theorem~\ref{thm:reduction}).

\begin{theorem}\label{thm:main}
Let integers $n,d,k$ be such that $n>1$, $0\leq d\leq n$ and $0\leq k < n-d-1$.
For any maximal concept class $C\subseteq\{0,1\}^{n}$ of VC-dimension $d$,
the number of $k$-cubes contained in $\overline{C}$ is lower bounded
by $\sum_{i=k}^{n-d-1}{i \choose k}{n \choose i}$, and the bound
is met with equality iff $C$ is maximum.
\end{theorem}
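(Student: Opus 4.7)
The plan is a local-to-global argument using Sauer's Lemma applied to coordinate projections, with the equality case then pinned down via the cubical characterization of Theorem~\ref{thm:complement}.

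For the lower bound, I would partition the $k$-cubes of $\overline{C}$ according to their set $I \in \binom{[n]}{k}$ of free coordinates. A $k$-cube with free coordinate set $I$ is determined by a fixing $\phi \in \{0,1\}^{[n]\setminus I}$ of the remaining coordinates, and all $2^k$ of its vertices lie in $\overline{C}$ precisely when no concept of $C$ projects to $\phi$, i.e., when $\phi \notin \proj{[n]\setminus I}{C}$. Hence the number of such $k$-cubes in $\overline{C}$ with free coordinate set $I$ is exactly $2^{n-k} - |\proj{[n]\setminus I}{C}|$. Since $\proj{[n]\setminus I}{C}$ is a class on $n-k$ coordinates of VC-dimension at most $d$, Sauer's Lemma gives $|\proj{[n]\setminus I}{C}| \leq \sum_{j=0}^d \binom{n-k}{j}$, yielding a local lower bound $\sum_{j=d+1}^{n-k}\binom{n-k}{j}$ on the count with free set $I$.

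Summing over the $\binom{n}{k}$ choices of $I$ and applying the subset-of-subset identity $\binom{i}{k}\binom{n}{i} = \binom{n}{k}\binom{n-k}{i-k}$ together with the symmetry $\binom{n-k}{j} = \binom{n-k}{n-k-j}$ (reindexing $j \mapsto i-k$), the total lower bound recasts as exactly $\sum_{i=k}^{n-d-1}\binom{i}{k}\binom{n}{i}$. This gives the inequality for any class of VC-dimension $d$, not only the maximal ones.

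For the equality statement, the summed bound is tight iff every local Sauer bound is tight, iff every $(n-k)$-coordinate projection $\proj{[n]\setminus I}{C}$ is a maximum VC-$d$ class. If $C$ is maximum, Welzl's result that coordinate projections preserve the maximum property (\cf Section~\ref{sec:cubical-bg}) supplies this immediately. The main obstacle will be the converse: showing that for maximal $C$, simultaneous Sauer-saturation on every $(n-k)$-coordinate projection forces $C$ itself to be maximum. For this I would invoke Theorem~\ref{thm:complement}, which already presents $\overline{C}$ as an $(n-d-1)$-complete collection $\{T_J\}_{J\in\binom{[n]}{d+1}}$ with no properly contained such subcollection; the projection-wise Sauer equalities pin down the overlap pattern of the $T_J$'s, forcing the collection to be maximally overlapping so that $|\overline{C}| = |\bigcup_J T_J|$ attains its minimum, which by Theorem~\ref{thm:complement} is exactly the condition that $C$ be maximum. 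An alternative is induction on $n$, using that the hypothesis propagates to every further projection via Welzl so that both $p(C)$ and the reduction $C^x$ inherit the maximum property in the $(n-1)$-cube, whence $|C| = |p(C)| + |C^x|$ and Pascal's identity deliver $|C| = \sum_{j=0}^d \binom{n}{j}$.
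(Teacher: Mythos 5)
Your counting argument for the inequality is correct and takes a genuinely different route from the paper's. You partition the $k$-cubes of $\overline{C}$ by their set $I$ of free coordinates, observe that the count for a fixed $I$ is exactly $2^{n-k}-|p_{[n]\setminus I}(C)|$, and apply Sauer's Lemma to each $(n-k)$-coordinate projection; the identity $\binom{i}{k}\binom{n}{i}=\binom{n}{k}\binom{n-k}{i-k}$ and the symmetry of binomial coefficients then recover $\sum_{i=k}^{n-d-1}\binom{i}{k}\binom{n}{i}$ (I checked the reindexing; it is right). The paper instead counts cubes in the closed-below maximum class by layers, transfers the count to all maximum classes via lifting, and handles non-maximum classes by shifting. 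Your route is more elementary, avoids the lifting/shifting machinery entirely, and, as you note, yields the inequality for every VC-$d$ class rather than only maximal ones. The forward equality direction ($C$ maximum $\Rightarrow$ every projection maximum, by Welzl, $\Rightarrow$ equality) is also fine.

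The converse is where there is a genuine gap. Your reformulation is correct: equality holds iff every projection of $C$ onto $n-k$ coordinates is a maximum VC-$d$ class. But the implication ``all $(n-k)$-coordinate projections maximum $\Rightarrow C$ maximum'' is \emph{false} without the maximality hypothesis, so any proof of it must use maximality somewhere, and neither of your sketches does. Concretely, take $n=4$, $d=1$, $k=1$ and $C=\{0000,0011,0101,1001\}$. This class has VC-dimension $1$, its four points are pairwise at Hamming distance $2$, and each of its four $3$-coordinate projections is a four-point star, hence a maximum VC-$1$ class in the $3$-cube; consequently $\overline{C}$ contains exactly $32-16=16$ edges, meeting your bound with equality. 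Yet $|C|=4<5$, so $C$ is not maximum. (It is not maximal either---$0001$ can be added---which is why this does not contradict the theorem; but it kills any argument that ignores maximality.) Your induction sketch has a second, independent problem: from $|C|=|p(C)|+|C^x|$ you need the reduction $C^x$ to be maximum of VC-dimension $d-1$ to invoke Pascal's identity, but your hypothesis controls only projections of $C$, not reductions, and Welzl's theorem gives you the reduction only \emph{after} you know $C$ is maximum, which is circular. Without $|C^x|=\sum_{j=0}^{d-1}\binom{n-1}{j}$ the identity merely reproves Sauer's upper bound on $|C|$, whereas you need a matching lower bound. Your first sketch (``the projection-wise equalities pin down the overlap pattern, forcing maximal overlap'') is the place where maximality of $C$ and the $(n-d-1)$-complete structure of $\overline{C}$ would have to enter, but as written it is an assertion rather than an argument. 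The paper closes this direction differently: it shifts the $(n-d-1)$-cubes of $\overline{C}$, notes that shifting preserves their number while strictly decreasing the vertex count whenever $C$ is not maximum, and observes that losing a vertex forces a strict loss of $k$-cubes for every $k<n-d-1$.
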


To prove this result, we first count the number of cubes in 
maximum closed-below classes. 

\begin{lemma}
Let $C$ be a maximum closed-below class of VC-dimension $d$ in the
$n$-cube. Then $C$ contains $\sum_{i=k}^{d}{i \choose k}{n \choose i}$
$k$-cubes for each $0\leq k\leq d$.
\end{lemma}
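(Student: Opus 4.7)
The plan is to first identify explicitly the maximum closed-below class of VC-dimension $d$ in the $n$-cube, and then count $k$-cubes in that specific class by a direct combinatorial argument.

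First I would show that any maximum closed-below class of VC-dimension $d$ is exactly the Hamming ball $B_d = \{c\in\{0,1\}^n : |c|\le d\}$, where $|c|=\sum_i c_i$. For this I would argue in two steps. On the one hand, if $C$ is closed-below with $\VC{C}\le d$, then $C\subseteq B_d$: any $c\in C$ with support $T$ of size $w\ge d+1$ admits, for each subset $R\subseteq S$ of a fixed $(d+1)$-subset $S\subseteq T$, the concept $c_R\in C$; projecting $\{c_R : R\subseteq S\}$ onto $S$ recovers all of $\{0,1\}^S$, shattering $S$ and contradicting $\VC{C}\le d$. On the other hand, $B_d$ itself is closed-below, has VC-dimension exactly $d$, and cardinality $\sum_{i=0}^d{n\choose i}$, which equals the Sauer bound. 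Hence $C=B_d$.

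Next I would count the $k$-cubes in $B_d$. A $k$-cube of $\{0,1\}^n$ is specified by a choice of $k$ ``free'' coordinates $I\subseteq[n]$ and a ``fixing'' $v\in\{0,1\}^{[n]\setminus I}$ of the remaining coordinates. Such a cube lies entirely in $B_d$ iff every completion of $v$ by a vector in $\{0,1\}^I$ has total weight $\le d$, and the maximum weight over completions is exactly $k+|v|$. The cube is contained in $B_d$ precisely when $|v|\le d-k$. Summing over $I$ and admissible $v$,
\begin{equation*}
\#\{k\text{-cubes in } B_d\} \;=\; {n\choose k}\sum_{j=0}^{d-k}{n-k\choose j}\enspace.
\end{equation*}

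Finally, I would convert this expression into the form in the statement via the standard identity ${n\choose k}{n-k\choose j}={k+j\choose k}{n\choose k+j}$ (both equal $\frac{n!}{k!\,j!\,(n-k-j)!}$). Re-indexing with $i=k+j$ yields
\begin{equation*}
{n\choose k}\sum_{j=0}^{d-k}{n-k\choose j}\;=\;\sum_{i=k}^{d}{i\choose k}{n\choose i}\enspace,
\end{equation*}
which completes the proof. There is no serious obstacle here; the only non-routine step is pinning down the maximum closed-below class as the Hamming ball, and after that the enumeration is immediate from the very simple geometry of $B_d$.
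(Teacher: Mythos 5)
Your proof is correct, but the counting step takes a genuinely different route from the paper's. The paper assigns each $k$-cube of the Hamming ball to its unique weight-maximal (``top'') vertex: a vertex of weight $i$ is the top vertex of exactly ${i \choose k}$ $k$-cubes (choose which $k$ of its $i$ one-coordinates serve as the free directions), and summing over the ${n \choose i}$ vertices in layer $i$ for $k\le i\le d$ yields the stated formula directly, with no algebraic rearrangement; this is the same orientation idea used in the edge-density argument for one-inclusion graphs, which is why the authors phrase it that way. You instead parametrise $k$-cubes by their set of $k$ free coordinates together with the anchor $v$, observe that containment in $B_d$ is equivalent to $|v|\le d-k$, arrive at ${n\choose k}\sum_{j=0}^{d-k}{n-k\choose j}$, and then convert to the target form via the trinomial identity ${n\choose k}{n-k\choose j}={k+j\choose k}{n\choose k+j}$. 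Both counts are valid; the paper's bijection lands on the formula as written, while yours is more mechanical and makes the containment criterion for a subcube explicit. A further difference: the paper simply cites prior work for the identification of the maximum closed-below class with the Hamming ball, whereas you supply a short self-contained argument (a closed-below class containing a concept of weight at least $d+1$ shatters a $(d+1)$-set, so $C\subseteq B_d$, and equality of cardinalities via Sauer's Lemma forces $C=B_d$); that argument is correct and makes your write-up self-contained.
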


\begin{proof}
For each $d$, the maximum closed-below class of VC-dimension $d$ is 
the class with all concepts with $\ell_1$-norms at most $d$~\cite{RBR08}.
(In other words, all the concepts are binary strings of length $n$ in the $n$-cube
with at most $d$ ones).

For $k=0$ we must count the number of points in $C$. This is done
by simply partitioning the vertices of $C$ into layers, where each
layer contains vertices with the same $\ell_{1}$-norm. (In other words, 
the same number of ones). At the top $d$
layer there are ${n \choose d}$ nodes of norm $d$, at layer $d-1$
there are ${n \choose d-1}$ nodes, etc. down to the bottom $0$ layer
which consists of a single vertex of zero norm.

The $k=1$ case corresponds to the edge counting argument in bounding
the density of one-inclusion graphs~\cite{HLW94,RBR08}, which is one of the steps 
used in proving Sauer's Lemma by shifting. By noting that every
edge connects one vertex with lower norm to a vertex with higher norm,
we may count edges uniquely by considering edges oriented downwards,
and again partitioning them by the norm of the higher incident vertex.
At the top $d$ layer each of the ${n \choose d}$ vertices identifies
$d={d \choose 1}$ edges, at the next $d-1$ layer each of the ${n \choose d-1}$
vertices identifies $d-1={d-1 \choose 1}$ edges, etc. all the way
down to the first layer where each of the ${n \choose 1}$ vertices
identifies $1={1 \choose 1}$ edge.

For the general $k>1$ case the argument remains much the same. Now
instead of orienting edges away from their top incident vertex, we
orient $k$-cubes away from their top incident vertex; where each
edge is identified by specifying the top and bottom vertices, each
$k$-cube is identified by specifying the top vertex and each of its
$k$ neighboring vertices in the $k$-cube. We again partition the
$k$-cubes by the layers of their top vertices. The top $d$ layer
contains ${n \choose d}$ vertices each of which identifies ${d \choose k}$
$k$-cubes, the $d-1$ layer contains ${n \choose d-1}$ vertices
each identifying ${d-1 \choose k}$ $k$-cubes, all the way down to
the $k$ layer which contains ${n \choose k}$ vertices each identifying
$1={k \choose k}$ $k$-cubes.
\qed\end{proof}

We may now prove the main result of this section.

\begin{theopargself}
\begin{proof}[of Theorem~\ref{thm:main}]
Consider the technique of lifting (as reviewed in Sect.~\ref{sec:cubical-bg}): it is obvious that the lifting process does not change the number of $k$-cubes for all $k$ with $0 \leq k <d$. 
And since lifting always creates maximum classes, and all
such classes are created by lifting, it follows that all maximum classes of VC-dimension $d$ have the same number of $k$-cubes as the closed-below maximum classes of VC-dimension $d$.

The final step is to show that for any class $C$ in the $n$-cube which is not maximum,  $\overline C$ must have more $k$-cubes than a maximum class of VC-dimension $n-d-1$, for all $k$ satisfying $0 \leq k < n-d-1$. This can be established using shifting---the inverse
process to lifting where all points move along a chosen dimension towards zero provided no existing points block movement~\cite{H95}. Namely we know that $\overline C$
is a complete union of $(n-d-1)$-cubes, since $C$ is maximal with VC-dimension $d$. It is convenient to shift $(n-d-1)$-cubes rather than vertices. Namely for the \ith{i}{th} coordinate, we can shift an $(n-d-1)$-cube of $\overline C$ with anchor containing this coordinate and having value $x_i=1$ to the value $x_i=0$. Notice that this type of shifting preserves the number of $(n-d-1)$-cubes but may decrease the number of lower-dimensional cubes. In fact, since by assumption $C$ is not maximum, neither is $\overline C$. So during the shifting process, the number of vertices must decrease, \ie two vertices which differ only at the \ith{i}{th} coordinate become identified. But then it is easy to see that the number of $k$-cubes decreases for all $k$ with $0 \leq k < n-d-1$ by considering $k$-cubes having one or other of these two vertices. This completes the proof.
\qed\end{proof}
\end{theopargself}

\section{An Iterated-Reduction Characterisation of Maximum Classes}\label{sec:iterated}

In this section, we offer another characterisation of maximum classes (\cf Theorem~\ref{thm:reduction}), which we subsequently use in Sect.~\ref{sec:deficiency} to show existence of projections that strictly reduce deficiency, and
again in Sect.~\ref{sec:examples} to build examples of classes of VC dimension $d$ which cannot be embedded into maximum classes of VC dimension $2d-1$. The characterisation is in terms of iterated reductions.
%We also give an interesting connection to classes which have a property which is similar to being maximal of VC dimension $d$.

\begin{definition}
Consider a $d$-complete collection $C$ embedded in the $n$-cube, a set of $d-1$ directions $S\subset[n]$, and the projection of $C$ onto directions $\overline{S}$. Then the \term{iterated reduction} $C^S$ of $C$ under this projection is the graph $G$ embedded in the $(n-d+1)$-cube with edges the images of $n-d+1$ $d$-cubes of $C$ varying along $S$, nodes the images of the $(d-1)$-faces of directions in $S$, and with a node incident to an edge when (respectively) the corresponding $(d-1)$-face is contained in the corresponding $d$-cube.
%We define the \term{iterated reduction} of a complete collection of $d$-cubes $C$, embedded in an $n$-cube, as the graph $G$ obtained by taking the projection of the $n$-cube to an $(n-d+1)$-cube, where $G$ is the union of $d$-cubes which contain the $d-1$ directions of projection. Equivalently $G$ is the subcollection of $d$-cubes which all share the same set $S$ of $d-1$ coordinate directions, formed into a graph by taking each cube as an edge connecting the two faces of dimension $d-1$ which have the set $S$ of coordinates.
\end{definition}

\begin{figure}[tb]
\sidecaption[t]
\includegraphics[height=1.6in]{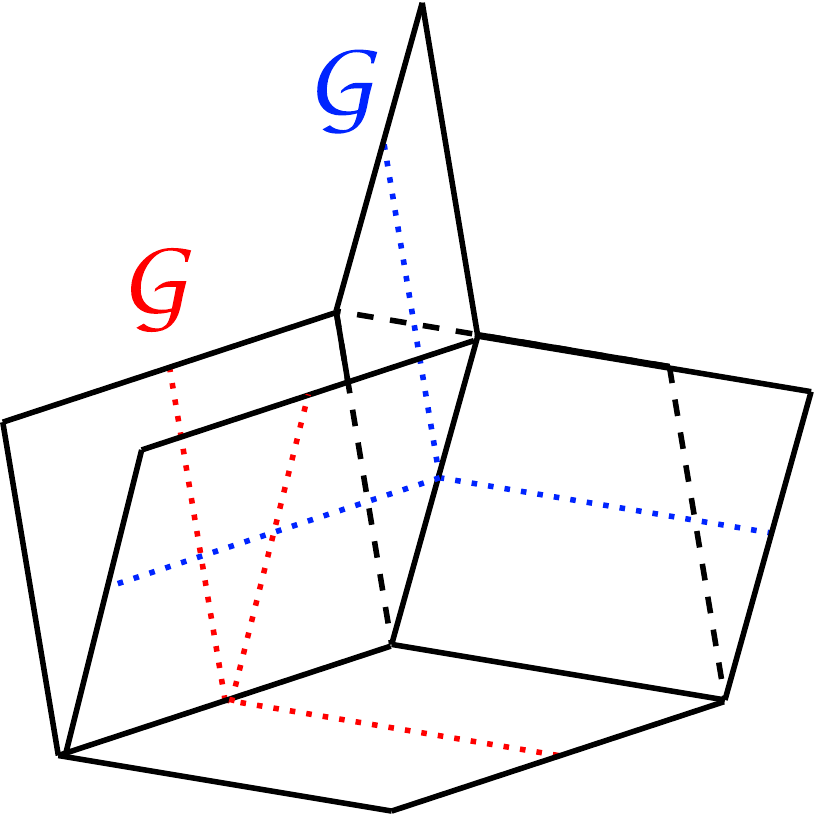}
\caption{The iterated reduction trees of a concept class.}
\label{fig:fig1}
\end{figure}

Figure~\ref{fig:fig1} illustrates the iterated reductions for a class $C$.

\begin{proposition}\label{prop:forest} % n-d-1 --> d
For every class $C$ which is a complete union of $d$-cubes, every $(d-1)$-iterated reduction is a forest.
\end{proposition}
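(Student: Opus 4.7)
My plan is to show that the iterated reduction $G = C^S$ has at most one edge in each coordinate direction of the ambient $(n-d+1)$-cube, and then invoke the standard parity obstruction for cycles in the hypercube graph to conclude $G$ has no cycle.

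First I would unpack the definitions in light of the $d$-completeness hypothesis. Since $C$ is a $d$-complete collection, it consists of exactly $\binom{n}{d}$ $d$-cubes, one for each $d$-subset of $[n]$ appearing as a direction set. In particular, for each $s \in \bar S := [n]\setminus S$, there is a unique $d$-cube $D_s \in C$ whose direction set is $S \cup \{s\}$. Its image under the projection onto $\bar S$ is a single edge of the $(n-d+1)$-cube along coordinate direction $s$, because the $d-1$ directions of $S$ collapse to a point while the remaining direction $s$ survives as a $1$-cube. Conversely, every edge of $G$ arises this way, since by definition the edges of $G$ are exactly the images of the $d$-cubes of $C$ varying along all of $S$. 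So $G$ has exactly $n-d+1$ edges, one in each coordinate direction of $\bar S$.

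Next I would use the elementary parity fact about the hypercube graph: any closed walk in the $1$-skeleton of the $(n-d+1)$-cube must traverse each coordinate direction an even number of times, since traversing an edge in direction $s$ flips coordinate $s$ and the walk must return to its start. In particular, any cycle uses at least two distinct edges in some coordinate direction. But $G$ has at most one edge per direction, so $G$ contains no cycle and is therefore a forest.

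The heart of the argument is that $d$-completeness gives sharp control on how many edges $G$ can have in each axis direction, sharp enough to rule out cycles outright. The only point requiring any care is checking that the assignment $D_s \mapsto$ (edge of direction $s$ in the $(n-d+1)$-cube) is a well-defined injection, with exactly one preimage per direction $s \in \bar S$; this is immediate from the cardinality count in the definition of $d$-completeness, so I do not anticipate a substantial obstacle.
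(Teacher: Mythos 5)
Your proposal is correct and takes essentially the same approach as the paper's proof: both derive a contradiction from the fact that a cycle in the $1$-skeleton of the $(n-d+1)$-cube must traverse some coordinate direction twice, while $d$-completeness supplies exactly one $d$-cube (hence one edge of $G$) per direction $s\in[n]\setminus S$. You merely make explicit the parity step that the paper leaves implicit.
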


\begin{proof}
Consider a $(d-1)$-iterated reduction $G$ along $d-1$ colors $S$.
Assume $G$ has a cycle. Project out the $d-1$ coordinates corresponding to the colours in $S$. The cycle in $G$ corresponds to a collection of $d$-cubes which project to edges in the binary $(n-d+1)$-cube. Hence there are two such edges of the same colour which come from different $d$-cubes with the same colours. This is a contradiction, since there is only one $d$-cube per choice of colours in $C$.
\qed\end{proof}

\begin{theorem}\label{thm:reduction}
A complete union $C$ of $d$-cubes in the $n$-cube is a maximum class if and only if all the $(d-1)$-iterated reductions are trees, \ie are connected. 
\end{theorem}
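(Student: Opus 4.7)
The plan is to compute the total number of $(d-1)$-cubes of $C$ in two different ways and use the inequality that results. Because $C$ is $d$-complete with VC dimension exactly $d$---two $d$-cubes of the same colour would force a shattered $(d+1)$-set and bump the VC dimension above $d$---$C$ contains exactly $\binom{n}{d}$ $d$-cubes, one per colour set. Hence $|E(C^S)|=n-d+1$ for every $(d-1)$-subset $S\subseteq[n]$, and Proposition~\ref{prop:forest} makes $C^S$ a forest, forcing $|V(C^S)|\ge n-d+2$ with equality exactly when $C^S$ is a tree. Since each $(d-1)$-cube of $C$ has a unique colour set $S$ and corresponds bijectively to a node of $C^S$, summing over $S$ gives
\[
|(d-1)\text{-cubes of }C|\;=\;\sum_{|S|=d-1}|V(C^S)|\;\ge\;\binom{n}{d-1}(n-d+2),
\]
with equality precisely when every $C^S$ is a tree.

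For the forward direction, assume $C$ is maximum. The lemma of Section~\ref{sec:counting} evaluates the left-hand side at $\binom{n}{d-1}+d\binom{n}{d}$, which the identity $d\binom{n}{d}=(n-d+1)\binom{n}{d-1}$ rewrites as exactly $\binom{n}{d-1}(n-d+2)$. So the displayed inequality is an equality and every $C^S$ must be a tree.

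For the converse I would induct on $d$. The base case $d=1$ is direct: the tree $C^\emptyset$ has $n$ edges and $n+1$ vertices, so $|C|=n+1$ matches the Sauer bound; it also contains no $4$-cycle, so no $2$-cube lies in $C$, giving $\VC{C}=1$ and hence $C$ maximum. For the inductive step, fix any $i\in[n]$ and note that the $(d-2)$-iterated reductions of $C^i$ inside the $(n-1)$-cube coincide with $\{C^{T\cup\{i\}}:T\subseteq[n]\setminus\{i\},\,|T|=d-2\}$, all trees by hypothesis; since $C^i$ is $(d-1)$-complete by $d$-completeness of $C$, the inductive hypothesis gives that each $C^i$ is maximum VC-$(d-1)$ in the $(n-1)$-cube.

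The remaining step---deducing that $C$ itself is maximum from the fact that every $C^i$ is---is what I expect to be the main obstacle. Summing $|k\text{-cubes of }C^i|$ over $i$ and applying $\sum_i |k\text{-cubes of }C^i|=(k+1)\,|(k+1)\text{-cubes of }C|$ forces $|k\text{-cubes of }C|$ to match the maximum-class value for every $k\ge 1$, but does not directly pin down $|C|$ itself. To close this gap I would try either (a) a Mayer--Vietoris computation that decomposes $C$ along direction $i$ as the pushout of $C|_{x_i=0}$ and $C|_{x_i=1}$ over $C^i$, combined with the known contractibility of maximum classes, to establish $\chi(C)=1$ and thereby recover $|C|$ from the already-determined higher cube counts; or (b) prove the direct dual of Theorem~\ref{thm:main}---that a $d$-complete VC-$d$ class satisfies $|(d-1)\text{-cubes of }C|\ge\binom{n}{d-1}(n-d+2)$ with equality iff $C$ is maximum---by a shifting argument on $d$-cubes of $C$ that strictly decreases the $(d-1)$-cube count whenever $C$ has positive deficiency.
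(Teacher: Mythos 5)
Your opening computation is exactly the one in the paper's proof, just phrased differently: the paper sums Euler characteristics of the $\binom{n}{d-1}$ iterated reductions to get $F - d\binom{n}{d} \le \binom{n}{d-1}$ with equality iff every forest is a tree, which is the same as your $F=\sum_{|S|=d-1}|V(C^S)|\ge\binom{n}{d-1}(n-d+2)$ with the same equality case. Your forward direction is also sound, though the paper gets it more directly from Welzl's fact that any reduction of a maximum class is maximum, so each $C^S$ is a $1$-maximum class, i.e.\ a tree; your route through the cube count works too, provided you import from the proof of Theorem~\ref{thm:main} the observation that lifting preserves cube counts, so that the closed-below count of the lemma applies to all maximum classes.

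The converse is where you go astray, but the gap you flag is not a real one. Your fallback (b) is not a statement you need to prove---it \emph{is} Theorem~\ref{thm:main}, read in the complement. A $d$-complete collection $C$ is the complement of a class of VC dimension at most $n-d-1$, and the theorem with $k=d-1$ says the number of $(d-1)$-cubes of $C$ is at least $\sum_{i=d-1}^{d}\binom{i}{d-1}\binom{n}{i}=\binom{n}{d-1}+d\binom{n}{d}=\binom{n}{d-1}(n-d+2)$, with equality iff $C$ is maximum; the shifting-of-$d$-cubes argument in its proof is precisely the ``shifting that strictly decreases the $(d-1)$-cube count for non-maximum classes'' you sketch. (Strictly, the theorem is stated for maximal complements, but its shifting step applies to any complete collection, which is how the paper invokes it.) So the converse is one line: all iterated reductions trees forces $F=\binom{n}{d-1}(n-d+2)$, the equality case, hence $C$ is maximum. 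By contrast, the induction on $d$ that you pursue first cannot be closed the way you hope: ``every reduction $C^x$ is maximum'' does not by itself force $C$ to be maximum (this is exactly the difficulty that Proposition~\ref{prop:deficit} has to manoeuvre around), and the Mayer--Vietoris route is unnecessary machinery. In short, your proposal contains all the ingredients of the paper's proof but fails to recognise that the decisive equality-case statement is already in hand.
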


\begin{proof}
Firstly, if $C$ is a maximum class, then any reduction is maximum~\cite{W87,KW07}. Now $G$ can be viewed as the result of taking multiple reductions $d-1$ times so is a maximum class of VC-dimension $1$, \ie a tree, proving the necessity of connectedness.

For the converse, we note that a tree has Euler-characteristic one, whereas a forest has Euler characteristic given by the number of trees in the forest (\cf \eg~\cite{Trudeau94}). Therefore if all the iterated reductions are trees, the sum of all their Euler characteristics is the number of iterated reductions, which is clearly ${n \choose d-1}$, since this is the number of ways of choosing a set $S$ of $d-1$ coordinate directions. The Euler characteristic is defined as the number of vertices minus the number of edges of a graph; for the collection of iterated reductions, counting up all the edges gives $d$ times the number of $d$-cubes in a complete collection, which is $d{n \choose d}$, since each $d$-cube is counted $d$ times, one for each pair of $(d-1)$-subcubes with the same collection of $d-1$ coordinates. The total number of vertices in the trees is the number $F$ of $(d-1)$-cubes in $C$. We conclude that $${n \choose d-1}=F-d{n \choose d}$$ if all the iterated reductions are trees. Consequently, this can be rewritten as $$F={n \choose d-1}+d{n \choose d}={n \choose d-1}+(n-d+1){n \choose d-1}$$ which is the expression for the number of $(d-1)$-cubes in a maximum class of VC-dimension $d$ in the $n$-cube by Theorem~\ref{thm:main}. So applying the theorem, we conclude that if all the iterated reductions of a class $C$ are trees, then $C$ is a maximum class. 
\qed\end{proof}

Note that the graph $G$ depends on the choice of the cubical structure of $C$. So if $C$ has different cubical structures, it yields different iterated reductions.
%The previous result Theorem~\ref{thm:reduction} can be interpreted as saying that $C$ is a maximum class if and only if every subgraph $\Gamma_c$ is a tree. In this case, there is a unique way of expressing $C$ as a complete union of $(n-d-1)$-cubes.
The following minor, but novel, result proves that maximum classes have unique iterated reductions.

\begin{lemma}
Any class $C\subseteq\{0,1\}^n$ containing two $d$-cubes of the same set of colors has $\VC{C}\geq d+1$.
\end{lemma}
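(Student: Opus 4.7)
The plan is to observe that having two distinct $d$-cubes with the same set of colors forces a $(d+1)$-dimensional shatter. Let $S\subseteq[n]$ with $|S|=d$ be the common color set. By definition, each of the two $d$-cubes is determined by its fixed coordinate values on $[n]\setminus S$, and it varies freely over $S$. Call these two cubes $Q_1, Q_2\subseteq C$ with fixed patterns $a,b\in\{0,1\}^{[n]\setminus S}$.

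Since $Q_1\neq Q_2$, the patterns $a$ and $b$ must disagree on at least one coordinate $j\in[n]\setminus S$, and on this coordinate they take complementary values (one is $0$, the other is $1$). Now I would consider the projection $\proj{S\cup\{j\}}{C}$. The projection of $Q_1$ to $S\cup\{j\}$ equals $\{0,1\}^S\times\{a_j\}$, and the projection of $Q_2$ equals $\{0,1\}^S\times\{b_j\}$. Since $a_j\neq b_j$, the union of these two projections is exactly $\{0,1\}^{S\cup\{j\}}$, so $S\cup\{j\}$ is shattered by $C$ and $\VC{C}\geq d+1$.

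There really is no hard step; the one thing to be careful about is to note explicitly that having the \emph{same set of colors} means that the two cubes agree on which coordinates vary freely, hence any disagreement between the cubes is confined to coordinates outside $S$, on which each cube is constant. This is what makes the two projections combine to form the full $(d+1)$-cube rather than a partial subset. I expect the whole argument to occupy only a few lines.
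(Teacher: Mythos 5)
Your proof is correct and is essentially the paper's own argument: the paper likewise forms the $(d+1)$-set by adjoining to the $d$ common colors an anchor coordinate on which the two cubes differ, and notes that this set is shattered. Your write-up just makes the "trivially shattered" step explicit via the two complementary projections.
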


\begin{proof}
Form a set of $d+1$ colors by taking the $d$ colors of the cubes with any anchor color on which
the two cubes differ. Trivially this set is shattered.
\qed\end{proof}

\begin{corollary}
Let $C$ be a $d$-maximum class. Then $C$ has a unique representation as a $d$-complete collection.
\end{corollary}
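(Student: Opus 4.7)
The plan is to reduce uniqueness to the preceding lemma by a counting argument. A $d$-complete collection representing $C$ must consist of exactly $\binom{n}{d}$ $d$-subcubes of $C$, one per choice $I$ of $d$ coordinate directions: the definition guarantees at least one $d$-cube of color-set $I$ per $I$, and since there are exactly $\binom{n}{d}$ color-sets and the collection has cardinality $\binom{n}{d}$, the assignment $I \mapsto S_I$ is a bijection between $d$-subsets of $[n]$ and members of the collection.

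Next I would invoke the lemma immediately above: if $C$ contained two distinct $d$-cubes with the same color-set $I$, then $\VC{C} \geq d+1$, contradicting that $C$ is $d$-maximum (in particular VC dimension exactly $d$). Hence for each $I$ there is at most one $d$-cube of $C$ having colors $I$, and combining this with the previous paragraph, there is exactly one. Any $d$-complete collection representing $C$ must therefore take this uniquely-determined cube $S_I$ for every $I$, so the representation is forced and unique.

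The only mildly subtle point to confirm is that any $d$-complete collection that represents $C$ actually has its member cubes lying inside $C$ (so that the lemma applies inside $C$); this follows from the cubical characterisation recalled in Theorem~\ref{thm:complement}, where $C$ is the union of the collection, hence every cube in the collection is a subset of $C$. I do not expect a genuine obstacle here — this corollary is essentially a one-line consequence of the preceding lemma plus the pigeonhole matching between $d$-subsets of $[n]$ and the $\binom{n}{d}$ members of a $d$-complete collection.
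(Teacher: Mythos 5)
Your argument is correct and is exactly the route the paper intends: the corollary is stated as an immediate consequence of the preceding lemma, since a $d$-maximum class has VC dimension exactly $d$ and therefore cannot contain two distinct $d$-cubes with the same colour-set, forcing the cube $S_I$ to be uniquely determined for each of the $\binom{n}{d}$ colour-sets. Your extra remarks on the pigeonhole bijection and on the cubes lying inside $C$ (via Theorem~\ref{thm:complement}) are accurate and just make explicit what the paper leaves implicit.
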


\begin{remark}
We note that the set of $(d-1)$-iterated reductions can be integrated into the one structure 
known as the face graph in computational geometry. 
The \term{face graph} $\Gamma$ for a $d$-complete collection $C\subseteq\{0,1\}^n$, is a bipartite graph with vertices for each $d$-cube and each $(d-1)$-cube of $C$. $\Gamma$ has an edge between vertices associated to a $d$-cube and a $(d-1)$-cube, whenever the latter is a face belonging to the former. For any $S\subseteq[n]$ of size $d-1$, define induced subgraph $\Gamma_S$ of $\Gamma$ consisting of all vertices and edges corresponding to cubes whose directions contain $S$.
Then $\Gamma_S$ corresponds to the iterated reduction for directions $S$ subdivided to be made bipartite.
%The vertices of $\Gamma$ corresponding to $(n-d-2)$-cubes can be coloured by the $n-d-2$ coordinate directions of such a cube. The edges of $\Gamma$ can also be coloured by the colours of the incident $(n-d-2)$-cube's vertex. For colors $c\subseteq[n]$ of size $n-d-2$, let $\Gamma_c$ be the induced subgraph of $\Gamma$ consisting of all vertices and edges with color sets containing $c$.
\end{remark}

%The next graph captures similar structure to all the iterated reductions of a class (the relation is made explicit by Remark~\ref{rm:face-iterated}).

%\begin{df}
%The \term{face graph} $\Gamma$ for a class $\overline C$ in the $n$-cube, which is a complete union of $(n-d-1)$-cubes, is a bipartite graph with vertices for each $(n-d-1)$-cube and each $(n-d-2)$-cube of $\overline C$. $\Gamma$ has an edge between vertices associated to an $(n-d-1)$-cube and an $(n-d-2)$-cube, whenever the latter is a face belonging to the former.

%The vertices of $\Gamma$ corresponding to $(n-d-2)$-cubes can be coloured by the $n-d-2$ coordinate directions of such a cube. The edges of $\Gamma$ can also be coloured by the colours of the incident $(n-d-2)$-cube's vertex. For colors $c\subseteq[n]$ of size $n-d-2$, let $\Gamma_c$ be the induced subgraph of $\Gamma$ consisting of all vertices and edges with color sets containing $c$.
%\end{df}

%\begin{rem}\label{rm:face-iterated}
%$\Gamma_c$ corresponds to the iterated reduction subdivided to be made bipartite.
%\end{rem}

%\begin{rem}
%We have expressed Proposition~\ref{prop:forest} in terms of  $\overline C$ since the complementary class $C$ in the binary $n$-cube to $\overline C$ has VC dimension at most $d$ if and only if $\overline C$ contains a complete collection of $(n-d-1)$-cubes. Moreover if $C$ is maximal of VC dimension $d$ then 
%$\overline C$ is a complete collection of $(n-d-1)$-cubes. (See Theorem~\ref{thm:complement}).
%\end{rem}

\section{Deficiency and Embedding VC Classes into Maximum Classes}\label{sec:deficiency}

Our main result in this section is the following;

\begin{theorem}\label{thm:deficiency}
Suppose $C \subseteq \{0,1\}^n$ is a VC-$d$ concept class with deficiency $D$. Then there is an embedding of $C$ into a $(d+D)$-maximum class
$C^\star \subseteq \{0,1\}^n$.
\end{theorem}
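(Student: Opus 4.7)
The plan is to prove the theorem by induction on the deficiency $D$, following the two-step programme announced in the introduction. The base case $D = 0$ is trivial, since $C$ is already a maximum class; set $C^\star = C$.

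For the inductive step with $D \geq 1$, I would first establish a \emph{projection step}: there exists a coordinate direction $x \in [n]$ such that $p_x(C) \subseteq \{0,1\}^{n-1}$ has deficiency strictly smaller than $D$. Writing $M_d(n) = \sum_{i=0}^d \binom{n}{i}$ for the size of a VC-$d$ maximum class in the $n$-cube, the key observation is the cardinality decomposition $|C| = |p_x(C)| + |C^x|$, where $C^x$ is the reduction of $C$ along $x$; combined with Pascal's identity $M_d(n) = M_d(n-1) + M_{d-1}(n-1)$, the deficiency decomposes as $D_C = D_{p_x(C)} + D_{C^x}$ (assuming $\VC{p_x(C)} = d$; any VC drop only helps). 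It then suffices to exhibit a direction $x$ with $D_{C^x} \geq 1$, \ie along which the reduction is not maximum. Here is where Theorem~\ref{thm:reduction} enters: since $C$ is not maximum, at least one of its $(d-1)$-iterated reductions is a disconnected forest rather than a tree, and the colour responsible for the disconnection identifies the required direction.

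I would then carry out an \emph{embedding step}: convert the projection onto a maximum class into an embedding of $C$ into a maximum class in $\{0,1\}^n$. By induction applied to $p_x(C)$, there is a maximum class $C^{\star\star} \subseteq \{0,1\}^{n-1}$ of VC dimension $d + D - D_{C^x}$ containing $p_x(C)$. Using the lifting construction of Sect.~\ref{sec:cubical-bg}, I would build $C^\star$ by enlarging $C^{\star\star}$ to a maximum VC-$(d+D)$ class in the $(n-1)$-cube (the tail, at level $x = 0$, covering the $x=0$ slice of $C$), taking as reduction a maximum VC-$(d+D-1)$ class in the $(n-1)$-cube containing $C^x$, and lifting components appropriately to cover the $x = 1$ slice of $C$. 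The identity $M_{d+D}(n) = M_{d+D}(n-1) + M_{d+D-1}(n-1)$ ensures the cardinalities match, and maximality of $C^\star$ follows from the lifting construction itself.

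The main obstacle I anticipate is the projection step---carefully translating the disconnected-forest obstruction provided by Theorem~\ref{thm:reduction} into the specific coordinate whose reduction is non-maximum. The embedding step is comparatively mechanical, but two subtleties deserve care: that any maximum VC-$d'$ class can indeed be enlarged to a maximum VC-$(d'+1)$ class in the same ambient cube (justifying the enlargement used above), and that any drop in VC dimension under projection is cleanly absorbed by the inductive accounting of deficiencies.
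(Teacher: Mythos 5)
Your projection step is essentially the paper's Proposition~\ref{prop:deficit}: the same identity $D - D' = \sum_{i=0}^{d-1}\binom{n-1}{i} - |C^x|$ shows, via Sauer's Lemma, that the deficiency drops unless the reduction $C^x$ is $(d-1)$-maximum, and Theorem~\ref{thm:reduction} (together with the observation that an iterated reduction of $C$ along $R$ is isomorphic to an iterated reduction of $C^x$ along $R\setminus\{x\}$) supplies a direction whose reduction is not maximum when $C$ is not. Organising this as an induction on $D$, rather than projecting all the way down to a maximum image and only then lifting back as the paper does, is a cosmetic difference. Your parenthetical that a drop in $\VC{p_x(C)}$ ``only helps'' is correct but should be backed by the one-line computation $D - D'' = \sum_{i=0}^{d}\binom{n-1}{i} - |C^x| \ge \binom{n-1}{d} > 0$, where $D''$ is the deficiency of $p_x(C)$ measured against its own, smaller VC dimension.

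The genuine gap is in your embedding step. You propose to cover the two slices $C_0=\{c\in C: c_x=0\}$ and $C_1=\{c\in C: c_x=1\}$ separately, putting only $C^x$ into the doubled reduction and relying on the tail placement and on the choice of lift level for each connected component to catch the rest. But the components into which the reduction splits the projected class are determined by its cubical structure, not by $C$: nothing prevents a single component from containing the projection of a point of $C_0$ and of a point of $C_1$, neither lying over the reduction, and then no choice of level for that component yields $C\subseteq C^\star$. You would need the reduction to separate $p_x(C_0)$ from $p_x(C_1)$ outside $C^x$, and you give no argument that such a reduction exists; you also assume without proof that a maximum class in the $n$-cube realising your prescribed projection/reduction pair exists at all. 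The paper's Proposition~\ref{prop:projection} dissolves both difficulties by arranging for \emph{all} of $C$ to lie in the doubled part: since $C\subseteq p_x^{-1}(C^{\star\star})=C^{\star\star}\times\{0,1\}$, it suffices to find a maximum class containing this whole product, and that is done in the complement---$\overline{C^{\star\star}}\times\{0,1\}$ is disjoint from $C$, lifting produces a maximum VC-$(n-d-D'-2)$ class inside it, and the complement of that class is a $(d+D'+1)$-maximum class containing $C$, with $d+D'+1\le d+D$. Replacing your tail-and-component bookkeeping with this complement construction repairs the step and the rest of your argument goes through.
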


The proof of this will follow immediately from two preliminary results, which are of independent interest. 

\begin{proposition}\label{prop:projection}
Suppose $C \subseteq \{0,1\}^n$ is a VC-$d$ concept class and for some $k$, there is a projection $p:\{0,1\}^n \to \{0,1\}^{n-k}$ so that 
$p(C)$ is $d$-maximum. Then there is a $(d+k)$-maximum class $C^\star \subseteq \{0,1\}^n$ so that $C \subseteq C^\star$. 
\end{proposition}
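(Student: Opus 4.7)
The plan is to reduce the proposition to a product case and then iterate a one-coordinate-at-a-time construction. Since $C\subseteq p^{-1}(p(C))$, setting $M = p(C)$ it suffices to construct a $(d+k)$-maximum class $C^\star \subseteq \{0,1\}^n$ containing the product $M \times \{0,1\}^k$.

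I would proceed by induction on $k$. The base case $k=0$ takes $C^\star = M$. The inductive step reduces to the following \emph{one-step claim}: for every $d$-maximum class $N \subseteq \{0,1\}^m$, there is a $(d+1)$-maximum class $\widetilde N \subseteq \{0,1\}^{m+1}$ with $N \times \{0,1\} \subseteq \widetilde N$. Iterating this claim $k$ times, starting from $M$ and feeding each output back as the new input, yields a $(d+k)$-maximum class in $\{0,1\}^n$ containing $M \times \{0,1\}^k$, since at each step $N \times \{0,1\}^j \subseteq \widetilde N \times \{0,1\}^j$ and then $\widetilde N \times \{0,1\} \subseteq \widetilde{\widetilde N}$, and so on.

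To prove the one-step claim I would invoke the lifting technique recalled in Sect.~\ref{sec:cubical-bg}. Lifting takes a reduction $C_R$ and a projection $C_P\supseteq C_R$ that are maximum classes in $\{0,1\}^m$ of VC dimensions $d$ and $d+1$ respectively, and produces a $(d+1)$-maximum class in $\{0,1\}^{m+1}$ whose reduction along the new coordinate equals $C_R$. Setting $C_R = N$ and $C_P = N^+$ for a $(d+1)$-maximum extension $N^+\supseteq N$ in $\{0,1\}^m$, the lifted class $\widetilde N$ has reduction $N$; by definition of reduction this forces both $(y,0)$ and $(y,1)$ to lie in $\widetilde N$ for each $y\in N$, i.e. $N\times\{0,1\}\subseteq\widetilde N$ as required.

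The principal obstacle is the subsidiary claim that every $d$-maximum class $N\subseteq\{0,1\}^m$ (with $d<m$) extends to a $(d+1)$-maximum class $N^+$ in the same ambient cube. I would prove this via Theorem~\ref{thm:complement}: $\overline N$ is a maximally overlapping $(m-d-1)$-complete collection of cubes, so one selects $(m-d-2)$-faces of these cubes --- one for each $(m-d-2)$-subset of $[m]$ --- and takes their union as $\overline{N^+}$. Coordinating the face choices to ensure maximal overlap, equivalently that the union has the correct cardinality $\sum_{i=d+2}^m \binom{m}{i}$, can be certified by verifying the iterated-reduction tree condition of Theorem~\ref{thm:reduction} for the resulting complementary collection, or equivalently by a direct cardinality check against Theorem~\ref{thm:main}. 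Once this extension exists, the lifting above produces the one-step claim, and the inductive chain then delivers the desired $(d+k)$-maximum $C^\star\supseteq C$.
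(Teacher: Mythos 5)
Your overall architecture---reduce to containing the product $p(C)\times\{0,1\}^k$, induct on $k$, and realise each inductive step by a lift along one new coordinate---is essentially the paper's, which likewise factors $p$ into single-coordinate projections and lifts once per coordinate. The difference is that you work on the primal side while the paper works in the complement, and this is where a genuine gap appears. Your one-step claim hinges on the subsidiary assertion that every $d$-maximum $N\subseteq\{0,1\}^m$ extends to a $(d+1)$-maximum $N^+$ in the \emph{same} cube; you correctly flag this as the principal obstacle but do not discharge it. ``Select one $(m-d-2)$-face per $(m-d-2)$-subset of $[m]$ and coordinate the choices so that the result is maximum'' is a restatement of what must be proved, not a proof: each $(m-d-2)$-subset $J$ admits up to $2(d+2)$ candidate faces among the cubes of $\overline N$, and nothing in your sketch exhibits a consistent selection or shows one exists. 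The fact you need is exactly the tail/reduction structure the paper invokes from Kuzmin \& Warmuth: $\overline N$ is $(m-d-1)$-maximum, and its reduction is an $(m-d-2)$-maximum class sitting inside $\overline N$ as a union of codimension-one faces of its cubes; complementing that reduction yields $N^+$. Without citing or reproving this, the extension step is unsupported.

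Second, your lifting step is applied to an input pair it has not been shown to accept. The lifting of Sect.~\ref{sec:cubical-bg} reconstructs a maximum class from a tail and a reduction whose cubes are faces of the projection's cubes; you feed it $(C_R,C_P)=(N,N^+)$ knowing only $N\subseteq N^+$ as vertex sets, and your construction of $N^+$ (which controls faces only on the complement side) gives no reason why the $d$-cubes of $N$ should be faces of the $(d+1)$-cubes of $N^+$. The paper avoids this by lifting in the complement, where the pair consisting of the reduction of $\overline{p(C)}$ and $\overline{p(C)}$ itself is compatible by construction, and then taking complements to obtain a $(d+1)$-maximum class containing $C$. Your step can in fact be repaired without any faces condition by a direct check: for $(d+1)$-maximum $C_P$ and $d$-maximum $C_R\subseteq C_P$ in $\{0,1\}^m$, the set $C_P\times\{0\}\cup C_R\times\{0,1\}$ has cardinality $\sum_{i=0}^{d+1}\binom{m}{i}+\sum_{i=0}^{d}\binom{m}{i}=\sum_{i=0}^{d+1}\binom{m+1}{i}$ and VC dimension at most $d+1$ (a shattered $(d+2)$-set avoiding the new coordinate would be shattered by $C_P$, and one containing it would force $C_R$ to shatter a $(d+1)$-set), hence is $(d+1)$-maximum and contains $N\times\{0,1\}$. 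With that repair, and with the Kuzmin--Warmuth reduction supplying $N^+$, your induction goes through.
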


\begin{proof}
The argument is by induction on $k$. Assume first that $k=1$. Since $p(C)$ is maximum, it follows that the complementary class $\overline {p(C)}$ is also maximum by \cite{RBR08}. Consider the inverse image of this complementary class $X=p^{-1}(\overline{p(C)})$. This has the structure of a product ${\overline {p(C)}} \times \{0,1\}$. We observe that there are embeddings of maximum classes of VC dimension $n-d-2$ in $X$. For by the tail-reduction procedure of \cite{KW07}, we can find a maximum VC-$(n-d-3)$ class embedded in the maximum VC-$(n-d-2)$ class $\overline {p(C)}$, as a union of faces of codimension one of the $(n-d-2)$-cubes. By lifting \cite{Rubinsteins09}, we can find many embeddings of maximum VC-$(n-d-2)$ classes in $X$. But then the complement of any such a class is a maximum VC-$(d+1)$ class in the binary $n$-cube containing $C$. This completes the first step of the induction argument. 

Now assume the result is correct for $k-1$. Let $p:\{0,1\}^n \to \{0,1\}^{n-k}$ be a projection and $C$ a VC-$d$ concept class in the binary $n$-cube, so that $p(C)$ is maximum of VC dimension $d$. We factorise $p$ into the composition of projections $p=p' \circ p''$ where $p':\{0,1\}^n \to \{0,1\}^{n-1}$ and $p'':\{0,1\}^{n-1} \to \{0,1\}^{n-k}$. Apply the induction step to the projection $p''$ and the class $p'(C)$. Since $p(C)$ has VC dimension $d$ clearly the same is true for $p'(C)$. We conclude that $p'(C)$ is contained in a maximum class $C^\star$ of VC dimension $d+k-1$.

To complete the proof, we follow the same approach as for the case $k=1$ applied to the image of the complementary maximum class $\overline{C^\star}$ in the binary $(n-1)$-cube. Namely by lifting, we can find maximum classes in $p'^{-1} (\overline{C^\star})$ of VC dimension $n-d-k-1$. The complement of such a class will then be a maximum class in the binary $n$-cube containing $C$ of VC-dimension $d+k$ as required. 
\qed\end{proof}

\begin{proposition}\label{prop:deficit}
Suppose $C \subseteq \{0,1\}^n$ is a VC-$d$ concept class which is not maximum. Then there is a projection $p:\{0,1\}^n \to \{0,1\}^{n-1}$ 
so that $p(C)$ has VC dimension $d$ and deficiency strictly less than the deficiency of $C$. 
\end{proposition}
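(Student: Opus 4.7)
The starting point is the size decomposition $|C| = |p_i(C)| + |C^i|$ from the tail-reduction structure. Combining this with Pascal's identity applied to Sauer's bound gives, whenever $p_i(C)$ retains VC dimension $d$,
\[
D(C) - D(p_i(C)) \;=\; \textstyle\sum_{j=0}^{d-1}\binom{n-1}{j} - |C^i|.
\]
Since $C^i$ is a VC-$(d-1)$ class in the $(n-1)$-cube, Sauer's Lemma forces this difference to be nonnegative, so projecting never increases the deficiency. To obtain strict decrease I need a coordinate $i$ that is \emph{good} -- meaning some shattered $d$-set of $C$ avoids $i$, equivalently $\mathrm{VC}(p_i(C)) = d$ -- and that also satisfies $|C^i| < \sum_{j=0}^{d-1}\binom{n-1}{j}$. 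Good coordinates exist whenever $n>d$, which holds since $C$ is not maximum.

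The heart of the argument is the contrapositive claim: if every good $i$ yields $C^i$ maximum of VC dimension $d-1$, then $C$ itself must be maximum. I would establish this by induction on $n$ with $d$ fixed. For the base case $n=d+1$, goodness of $i$ forces $p_i(C) = \{0,1\}^d$, and then the decomposition gives $|C| = 2^d + (2^d-1) = 2^{d+1}-1$, which is the maximum size.

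For the inductive step, fix a good $i$. Were $p_i(C)$ itself maximum in the $(n-1)$-cube the decomposition would make $|C|$ maximum, contradicting our hypothesis, so $p_i(C)$ is not maximum. Apply the inductive hypothesis to $p_i(C)$: there exists $k \ne i$, good for $p_i(C)$, with $|(p_i(C))^k| < \sum_{j=0}^{d-1}\binom{n-2}{j}$. The $d$-sets shattered in $p_i(C)$ are exactly those shattered in $C$ that avoid $i$ (since $p_{S'}(p_i(C)) = p_{S'}(C)$ for $S' \subseteq [n]\setminus\{i\}$), so any witness to $k$ being good in $p_i(C)$ is also a witness to $k$ being good in $C$. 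By the running assumption $C^k$ is maximum VC-$(d-1)$, and a short Pascal computation then yields $|p_i(C^k)| = \sum_{j=0}^{d-1}\binom{n-1}{j} - \sum_{j=0}^{d-2}\binom{n-2}{j} = \sum_{j=0}^{d-1}\binom{n-2}{j}$. Since trivially $p_i(C^k) \subseteq (p_i(C))^k$, this contradicts $|(p_i(C))^k| < \sum_{j=0}^{d-1}\binom{n-2}{j}$.

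The main subtlety -- and the anticipated obstacle -- is correctly aligning the notion of ``good coordinate'' for $C$ with that for its projection $p_i(C)$; the shattered-set identification described above is what bridges them and lets the inductive step close. A complementary viewpoint via Theorem~\ref{thm:reduction} shows that if \emph{every} $C^i$ is maximum then each is $(d-1)$-complete, forcing $C$ to be $d$-complete, and the bijection between $(d-1)$-iterated reductions of $C$ and $(d-2)$-iterated reductions of the $C^i$ then makes every iterated reduction of $C$ a tree, so $C$ is maximum. The induction above is needed precisely to upgrade this conclusion from ``every $i$'' to ``every good $i$''.
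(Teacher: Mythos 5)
Your proof is correct, and while its opening coincides exactly with the paper's---both start from $|C| = |p_i(C)| + |C^i|$ and the identity $D(C) - D(p_i(C)) = \sum_{j=0}^{d-1}\binom{n-1}{j} - |C^i|$, which is nonnegative with equality iff $C^i$ is a maximum VC-$(d-1)$ class---you establish the key contrapositive claim (every good coordinate having a maximum reduction forces $C$ to be maximum) by a genuinely different route. The paper invokes its iterated-reduction characterisation (Theorem~\ref{thm:reduction}): each $(d-1)$-iterated reduction $C^R$ with $R$ meeting the complement of the shattered set is identified with an iterated reduction of some maximum $C^x$ and hence is a tree, and the remaining case $R \subseteq S$ is handled by producing new shattered $d$-sets avoiding $R$ (a step the paper leaves rather informal). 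You instead run an induction on $n$ with $d$ fixed: if $p_i(C)$ is not maximum, the inductive hypothesis yields a coordinate $k$ good for $p_i(C)$ (hence, via the shattered-set identification, good for $C$) with $|(p_i(C))^k| < \sum_{j=0}^{d-1}\binom{n-2}{j}$, while the containment $p_i(C^k) \subseteq (p_i(C))^k$ and the fact that projections of maximum classes are maximum give $|(p_i(C))^k| \ge |p_i(C^k)| = \sum_{j=0}^{d-1}\binom{n-2}{j}$, a contradiction; the base case $n = d+1$ is a direct count. Your version is more elementary and self-contained---it uses only Sauer's Lemma, the tail/reduction cardinality identity, and Welzl's projection/reduction facts, with no appeal to the cubical-complex machinery or to $C$ being a complete union of $d$-cubes---and it handles the restriction to \emph{good} coordinates cleanly inside the induction rather than by an ad hoc search for new shattered sets. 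What the paper's route buys is a demonstration of its iterated-reduction characterisation in action and the extra structural conclusion that all iterated reductions of $C$ are trees. I find no gap in your argument.
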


\begin{proof}
Firstly, since $C$ has VC dimension $d$, there is a $d$-set $S\subseteq[n]$ shattered by $C$. Therefore for any $x\notin S$, the corresponding projection $p_{\overline x}$ from the binary $n$-cube to the binary $(n-1)$-cube maps $C$ onto a VC-$d$ class. The idea is to prove that for one such direction $x$, the deficiency of $p_{\overline x}(C)$ is strictly less than that for $C$.

As in \cite{KW07} we consider the tail/reduction of the projection $p_{\overline x}$ applied to $C$. We consider the image $p_{\overline x}(C)$ and the reduction $C^x$---the subset of the binary $(n-1)$-cube, such that $C^x \times \{0,1\}$ is all pairs of vertices $v_0,v_1 \in C$ with the property that $p_{\overline x}(v_0)=p_{\overline x}(v_1)$. We claim that either the deficiency of $p_{\overline x}(C)$ is strictly less than the deficiency of $C$ or the reduction $C^x$ is a maximum class of VC dimension $d-1$.

To prove the claim, note that the cardinalities of $C,p_{\overline x}(C)$ are related by $|C|=|p_{\overline x}(C)| + |C^x|$. On the other hand, the deficiencies $D,D^\prime$ of $C,p_{\overline x}(C)$ respectively satisfy $D= \sum_{i=0}^{d} {n \choose i} -|C|, D^\prime= \sum_{i=0}^{d} {n-1 \choose i} -|p_{\overline x}(C)|$ respectively. Hence we see that $D-D^\prime = \sum_{i=0}^{d-1} {n-1 \choose i} +|C|-|p_{\overline x}(C)|=\sum_{i=0}^{d-1} {n-1 \choose i} -|C^x|$. But the binomial sum is precisely the cardinality of a maximum VC-$(d-1)$ class in the binary $(n-1)$-cube and hence the difference is positive unless $C^x$ is maximum, by Sauer's lemma, since clearly the VC dimension of $C^x$ is at most $d-1$. This establishes the claim. 

We can now conclude that either the proposition follows, or for each $n-d$ directions $x\notin S$, the corresponding projection $p_{\overline x}$ has reduction $C^x$ for $C$ which is maximum of VC dimension $d-1$. In the latter case, consider an iterated reduction $C^R$ as in Theorem~\ref{thm:reduction}, where $R \cap \overline{S} \ne \emptyset$. It is easy to see that $C^R$ is isomorphic as a graph to an iterated reduction coming from a reduction class $C^x$, so long as $x$ is in $R \cap \overline{S}$. For then we can take the iterated reduction of $C^x$ corresponding to the set of directions $R \setminus \{x\}$ and it follows immediately that the two graphs are isomorphic. But then since $C^x$ is maximum, the corresponding iterated reduction is a tree. This shows that all iterated reductions $C^R$ are trees, so long as $R \cap \overline{S} \ne \emptyset$. 

To complete the proof, we need to deal with the iterated reductions $C^R$, where $R \subseteq [n] \cap S$. This is precisely the initial set of $d$ directions for which $C$ shatters. But since all the reductions $C^x$ are assumed maximum, for $x \notin S$ we see that $C$ shatters all sets of $d$ directions so that $x$ is one of the directions. To see this, note that $C^x$ maximum means that it is a complete union of $(d-1)$ cubes and multiplying by $\{0,1\}$ gives a set of $d$-cubes covering all sets of $d$ directions containing $x$. It is now easy to find new sets $S'$ of $d$ directions shattered by $C$ which do not contain any chosen set $R$ of $d-1$ directions. So the previous argument applies to show that either there is a direction $x$ so that the projection $p_{\overline x}$ reduces the deficiency of $C$ or all possible iterated reductions $C^R$ are trees. In the latter case, $C$ is a maximum class by Theorem~\ref{thm:reduction} and the proof is complete. 
\qed\end{proof}

\begin{theopargself}
\begin{proof}[of Theorem~\ref{thm:deficiency}]
Assume that $C$ is a VC-$d$ class in the binary $n$-cube with deficiency $D$. By repeated applications of Proposition~\ref{prop:deficit}, we can reduce the deficiency of $C$ to zero and hence get a maximum class as image, after at most $D$ projections along single directions. But then by Proposition~\ref{prop:projection}, this implies that there is an embedding of $C$ into a maximum class of VC dimension $d+D$. 
\qed\end{proof}
\end{theopargself}

\section{An Application to Inembeddability}\label{sec:examples}

In this section, we give examples of concept classes $C$ of VC-dimension $d$ which cannot be embedded in any maximum class of VC-dimension $2d-1$. Moreover we exhibit maximum classes of VC-dimension $2d$ which contain each of our classes $C$. This negative result improves previous known examples~\cite{Rubinsteins09} where it was shown that there is no constant $c$ such that any class of VC-dimension $d$ can be embedded in a maximum class of VC-dimension $d+c$. 

\begin{theorem}\label{thm:construction}
There are classes $C$ of VC-dimension $d$ in the binary $n$-cube for each pair $d,n$ satisfying $d$ is even and $n >2d+2$ with the following properties:
\begin{itemize}
\item There is no maximum class $C^\prime$ of VC-dimension at most $2d-1$ in the binary $n$-cube containing $C$.
\item There is a maximum class $C^\prime$ of VC-dimension $2d$ containing $C$ and $C^\prime$ can be taken as a bounded below maximum class, for a suitable choice of origin of the binary $n$-cube. 
\end{itemize}
\end{theorem}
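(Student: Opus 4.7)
The plan is to build $C$ explicitly as a complete union of $(n-d-1)$-cubes that decomposes into two well-separated pieces of the $n$-cube, and to use the iterated-reduction characterisation (Theorem~\ref{thm:reduction}) as the obstruction to embedding into low-dimensional maximum classes. Concretely, I would take $\overline C$ to be the union of two isomorphic copies of an $(n-d-1)$-complete collection, one anchored near a base vertex $v_0$ and the other near a second vertex $v_1$ with Hamming distance $\mathrm{dist}(v_0,v_1)$ strictly larger than $n-2d$. The hypothesis $n>2d+2$ guarantees that such a pair $(v_0,v_1)$ exists with the required gap, and choosing $d$ even lets both pieces sit symmetrically about an intermediate midpoint vertex $v_\star$. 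Because each piece is an $(n-d-1)$-complete collection, Theorem~\ref{thm:complement} immediately yields $\VC{C}=d$.

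For the positive embedding (the second bullet), I would take $v_\star$ as the origin of the cube. By the symmetric placement and the evenness of $d$, every concept in $C$ lies within Hamming distance $2d$ of $v_\star$, so $C$ is contained in the bounded-below maximum class consisting of all vertices of weight at most $2d$, which is a VC-$2d$ maximum class.

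The inembeddability in the first bullet is the heart of the argument. Suppose for contradiction that $C\subseteq C^\star$ with $C^\star$ maximum of VC-dimension $k\le 2d-1$. Then $\overline{C^\star}\subseteq\overline C$ is a complete union of $(n-k-1)$-cubes, each of dimension at least $n-2d\ge 3$. Applying Theorem~\ref{thm:reduction} to the maximum collection $\overline{C^\star}$ forces every $(n-k-2)$-iterated reduction to be a tree, and in particular connected. The decisive combinatorial claim is that no $(n-k-1)$-cube lying inside $\overline C$ can bridge the two pieces: such a cube would need $2^{n-k-1}\ge 8$ vertices straddling both halves, but the separation $\mathrm{dist}(v_0,v_1)>n-2d\ge n-k-1$ rules out a common subcube of that size. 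Every cube of $\overline{C^\star}$ is therefore trapped inside one of the two pieces. Picking a coordinate direction-set $S$ of size $n-k-2$ whose associated $(n-k-1)$-cubes appear in both pieces, the iterated reduction of $\overline{C^\star}$ along $S$ falls apart into at least two disjoint forest components, contradicting the tree condition. Hence no such $C^\star$ exists.

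The principal obstacle is the combinatorial bookkeeping in the last paragraph: making precise the internal structure of each piece (e.g., as shifted copies of the bounded-below VC-$d$ complement) so that (i) no $(n-k-1)$-cube bridges the two pieces for any $k\le 2d-1$, and (ii) for every such $k$ there exists a direction-set $S$ whose iterated reduction genuinely picks up cubes from both pieces. Both requirements are where the hypotheses $d$ even and $n>2d+2$ are used to their full strength, and getting this verification right---uniformly across all $k\le 2d-1$---is the main technical step.
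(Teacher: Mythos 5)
Your high-level strategy matches the paper's: decompose $\overline C$ into two families of cubes, show that every maximal-dimensional cube of a candidate maximum subclass $\overline{C^\star}$ is trapped in one family, and contradict the connectivity of an iterated reduction via Theorem~\ref{thm:reduction}. But the construction you propose cannot be realised, and the construction is the substance of this existence theorem. The error is in treating $(n-d-1)$-cubes as local objects that can be confined near a base vertex. An $(n-d-1)$-cube anchored on $d+1$ coordinates compatible with $v_0$ still contains vertices at Hamming distance $n-d-1$ from $v_0$; indeed the union of a complete collection of $(n-d-1)$-cubes anchored at the values of $v_0$ contains the entire Hamming ball of radius $n-d-1$ about $v_0$ (any vertex within that distance agrees with $v_0$ on some $(d+1)$-set of coordinates). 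Two such collections are disjoint only if $\mathrm{dist}(v_0,v_1) > 2(n-d-1)$, which exceeds the diameter $n$ of the cube precisely when $n>2d+2$. More simply, any two $(n-d-1)$-cubes whose anchor coordinate sets are disjoint automatically intersect in an $(n-2d-2)$-cube. So the ``well-separated pieces'' your inembeddability argument relies on do not exist, and the claim that no $(n-k-1)$-cube bridges the two pieces collapses. Separately, Theorem~\ref{thm:complement} only gives $\VC{C}\le d$ from the presence of an $(n-d-1)$-complete collection in $\overline C$; establishing $\VC{C}=d$ requires ruling out an $(n-d)$-complete collection in $\overline C$, which is a genuine verification (and harder with your doubled collection, which puts more vertices into $\overline C$). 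The positive embedding is likewise unjustified: there is no reason the complement of your $\overline C$ sits inside a Hamming ball of radius $2d$.

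The paper's construction shows how to repair this: take a \emph{single} $(n-d-1)$-complete collection, partition the coordinates into halves $A$ and $B$, and assign each cube an all-zeros anchor if the majority of its anchor coordinates lie in $A$ and an all-ones anchor otherwise ($d$ even makes the anchor length $d+1$ odd, avoiding ties). The two resulting families $\mathcal A$ and $\mathcal B$ \emph{do} intersect, but their intersection $S$ is a union of $(n-2d-2)$-cubes containing no $(n-2d-1)$-cube. The separation is thus dimensional rather than metric: a majority-counting argument traps each $(n-2d)$-cube of a candidate maximum class inside $\mathcal A$ or $\mathcal B$, and since adjacent cubes in an iterated reduction must share an $(n-2d-1)$-face, which $S$ cannot supply, the reduction disconnects. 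The $2d$-maximum container is then built explicitly as the complement of the complete collection of $(n-2d-1)$-cubes anchored at $0$ on $A$ and $1$ on $B$, which becomes closed-below after flipping the $B$ coordinates. You would need to replace your metric-separation construction with something of this dimensional-separation type for the argument to go through.
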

 
\begin{proof}
The proof proceeds by a number of steps. 
 
\myparagraph{Construction of $\mathbf C$} Partition the $n$ coordinates of a binary $n$-cube into sets $A,B$ of size $k,k$ or $k+1,k$, where $n=2k$ or $n=2k+1$ respectively. (In fact, roughly equal size will also work for the construction). We first describe the complement $ \overline C$ to $C$. $ \overline C$ is a complete union of $(n-d-1)$-cubes, the anchors of which are $(d+1)$-strings with the property that each string is either all zeros or all ones. The former is chosen if the majority of the anchor coordinates are in $A$ and the latter if the majority are in $B$. (Having $d$ even means that the anchors are of odd length, so we do not need tie-breaking).

\myparagraph{Computing VC Dimension}  It is immediate that the VC dimension of $C$ is at most $d$. We claim that the VC dimension cannot be less than $d$. If the VC dimension of $C$ was at most $d-1$, there would be a complete collection of $(n-d)$-cubes in the complementary class $\overline C$. We show that this leads to a contradiction. Suppose that $c$ is an $(n-d)$-cube embedded in $ \overline C$. The anchor for  $c$ is of length $d$. Assume $c$ is chosen so that there are exactly $\frac{d}{2}$ elements of the anchor in $A$ and $\frac{d}{2}$ in $B$. Consider an element $v \in c$ which has all the coordinates which are in $A$ but not in the anchor of $c$, having value one and all the coordinates which are in $B$ and not in the anchor, having value zero. As $v \in c \subset C$, it follows that $v$ is in one of the cubes $c_0$ of $C$. $c_0$ must have an anchor either consisting of $d+1$ zeros and the majority of the anchor coordinates must be in $A$ or the anchor has $d+1$ ones, with the majority of the anchor coordinates in $B$. But in both cases, there would be at least $\frac{d+2}{2}$ coordinates of $v$ which are in $A$ or $B$ and are all zeros or ones respectively. This gives a contradiction and we conclude that $c_0$ is not contained in $\overline C$ and hence the VC dimension of $C$ is $d$. 

\begin{figure}[tb]
\sidecaption[t]
\includegraphics[height=1.8in]{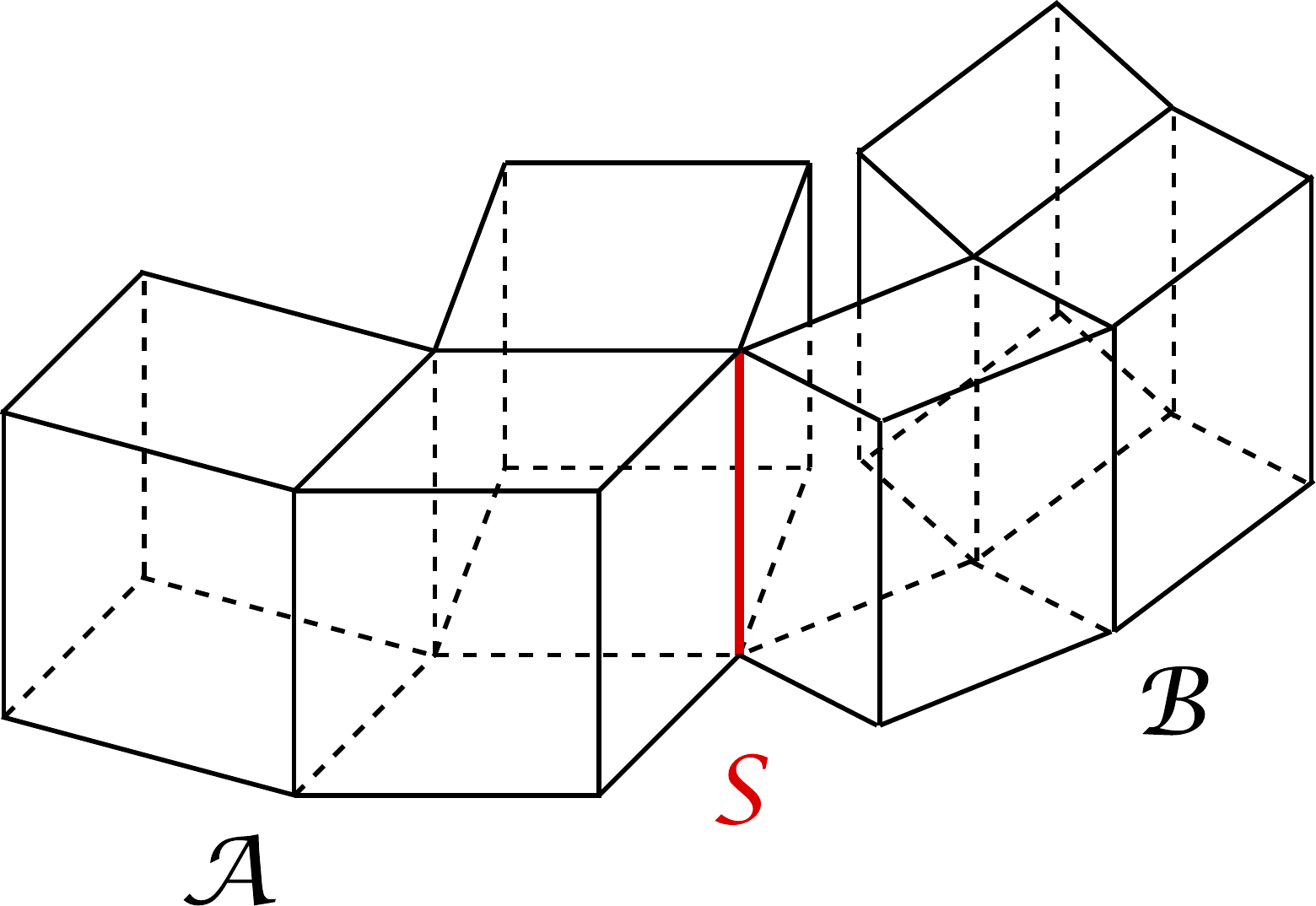}
\caption{Gluing two cubical subcomplexes $\mathcal{A}$ and $\mathcal{B}$ along a single edge $\mathcal{S}$.}
\label{fig:fig2}
\end{figure}

\myparagraph{Decomposing the Complementary Class} Divide $\overline C$ into two sets of cubes, $\mathcal A$ with anchors all zero and $\mathcal B$ with anchors all one.  We abuse notation by using the same symbol for a collection of cubes and also the elements in the unions of these cubes. Note that a pair of cubes, one from each of these two collections, either will be disjoint or will intersect in a cube of dimension $n-2d-2$, depending on whether the anchors have any coordinates in common or not. In particular, $S= \mathcal A \cap \mathcal B$ is a union of $(n-2d-2)$-cubes with anchors consisting of $d+1$ zeros and $d+1$ ones. No two of these cubes have anchors with exactly the same sets of coordinates. So $S$ is a subcollection of a complete collection of $(n-2d-2)$-cubes (\cf Fig.~\ref{fig:fig2}).

We claim there are no $(n-2d-1)$-cubes in $S= \mathcal A \cap \mathcal B$. Recall that any vertex in $S$ belongs to an $(n-2d-2)$-cube with anchor consisting of $d+1$ zeros and $d+1$ ones. But any $(n-2d-1)$-cubes must contain vertices which are not of this form, \eg which have at most $d$ zeros or ones. So this proves that $S$ has no $(n-2d-1)$-cubes. 

\myparagraph{Inembeddability into $\mathbf{(2d-1)}$-Maximum Classes}  We claim that no maximum class of VC dimension at least $n-2d$ can be contained in $\overline C$. Taking complements, this shows that the original class $C$ cannot be contained in a maximum class of VC dimension $\le 2d-1$. By~\cite{RBR08}, a maximum class $M$ of VC dimension at least $n-2d$ inside $\overline C$ is a complete union of cubes. We can assume without loss of generality that $M$ has VC-dimension $n-2d$ since it is well-known that any maximum class contains maximum classes of all smaller VC dimensions. The key step is to show that any $(n-2d)$-dimensional cube of $M$ is contained in either $\mathcal A$ or in $\mathcal B$. Once this is shown, it is easy to deduce a contradiction to the assumption that $M$ is maximum. For if we consider any iterated reduction of $M$ as in the previous section, not all the cubes can lie in $\mathcal A$ say. Hence some are in $\mathcal A$ and some in $\mathcal B$. But these cubes can only meet in $S=\mathcal A \cap \mathcal B$ which is a union of $(n-2d-2)$-cubes. Moreover we have previously shown there are no $(n-2d-1)$-cubes in $S$. Consequently, the assumption that these cubes have faces of dimension $n-2d-1$ in a tree structure for the iterated reduction is contradicted.

Consider an $(n-2d)$-cube $c^\prime$ of $M$.  Now the anchor has $2d$ digits. Clearly the anchor can have at least $d+1$ zeros or at least $d+1$ ones but not both. So without loss of generality, assume the anchor of $c^\prime$ has at least $d+1$ zeros. If the majority of the coordinates corresponding to these zeros are in $A$, then we see that $c^\prime \subset \mathcal A$ as required. Therefore it suffices to suppose that this is not the case, \ie the majority of the coordinates corresponding to the zeros in the anchor of $c^\prime$ are in $B$. But then we get a contradiction, because $c^\prime$ has vertices where all the coordinate entries outside the anchor which are in $A$ are all one and all those in $B$ are zero. For such a vertex clearly does not belong to $C$. We conclude that $c^\prime$ must be in $\mathcal A$ as claimed and the construction is complete. 

\myparagraph{Embedding into $\mathbf{2d}$-Maximum Classes} To show there is a maximum class $M^\star$ of VC-dimension $n-2d-1$ in $\overline C$, define the complete collection of $(n-2d-1)$-cubes of $M^\star$  to have anchors with entries zero for coordinates in $A$ and one for coordinates in $B$. It is easy to see that all these cubes are indeed in $C$, since the anchors are of length $2d+1$, so there must be either at least $d+1$ coordinates in $A$ or $d+1$ coordinates in $B$. Hence $M^\star \subset \overline C$. To see that $M^\star$ is maximum, flip all the coordinates in $B$ interchanging zero and one. Then it follows immediately that $M^\star$ is actually a closed-below maximum class. 
\qed\end{proof}

\section{Embedding of VC-$d$ Classes into VC-$(d+k)$ Maximum Classes}\label{sec:max}

In this section we develop an algorithm that, given a VC-$d$ class $C$ and desired positive integer $k$, builds a $(d+k)$-maximum class containing $C$ if one exists. 
We start by enlarging $C$ such that $\overline C$ is a complete union of $(n-d-1)$-cubes. Our aim is to find a complete union $\overline C^\star$ of $(n-d-k-1)$-cubes inside $\overline C$. The complement $C^\star$ is the required VC-$(d+k)$ maximum class containing $C$.

\begin{algorithm}[t!]
\caption{Compute all maximum embeddings}
\label{alg:embed}
\begin{algorithmic}[1]
\REQUIRE $\overline{C}$ a $(n-d-1)$-complete collection; integer $k > 0$
\STATE Initialise $\mathcal M$ as a queue comprising the closed-below $(n-d-k-1)$-complete collection
\FOR{$i=1$ to $n$}
  \STATE Compute $P$ the projection of $\overline{C}$ onto coordinates $\{1,\ldots,i\}$
  \STATE Initialise $\mathcal{M}'$ as an empty queue of complete collections
  \FOR{$M$ popped from $\mathcal{M}$}
    \STATE Compute $R$ the reduction of $M$ along $i$
    \STATE Compute $J$ the projection of $M$ along $i$
    \STATE Compute $\mathcal{K}$ the connected components of $J$ split by $R$
    \STATE Initialise $\mathcal{B}$ a set of iteratively built maximum classes to $\emptyset$
    \FOR{connected component $K$ in $\mathcal{K}$}
      \STATE Initialise $S$ set of connected components as $\emptyset$ 
      \FOR{$j=1$ to $2$}
        \IF{the projection of $K\times \{j\}$ on $\{1,\ldots,i\}$ is contained in $P$}
          \STATE Update $S = S \cup \{K\times \{j\}\}$
        \ENDIF
      \ENDFOR
      \STATE \textbf{if} $S=\emptyset$ \textbf{then} Exit loop jump to 5
      \STATE Update $\mathcal{B} = \mathcal{B} \times S$
    \ENDFOR
    \STATE Push the maximum classes built up in $\mathcal{B}$ onto $\mathcal{M}'$
  \ENDFOR
  \STATE Swap $\mathcal{M}$ with $\mathcal{M}'$
\ENDFOR
\RETURN $\mathcal{M}$
\end{algorithmic}
\end{algorithm}

Algorithm~\ref{alg:embed} aims to produce all $(d+k)$-maximum classes containing $C$.
The output of the algorithm $\mathcal{M}$ is this set, and is empty if no such classes
exist.
The strategy, working in the complement as usual, proceeds iteratively from the
canonical closed-below $(n-d-k-1)$-maximum class. At each iteration the next dimension in
$[n]$ is considered: components of the $(n-d-k-1)$-maximum classes from the
previous iteration are lifted along the chosen dimension to eventually be contained within
$\overline{C}$. In particular, we consider embedding in \emph{the dimensions processed so
far}---we check whether the lifted connected component projected onto these dimensions
is contained in $\overline{C}$ also projected. If a choice along the current dimension
achieves containment then
the class is retained; if both choices are feasible then the class is cloned with  
siblings making each choice; if neither choice is possible then the maximum class is 
discarded.

Essentially the process is one of lifting to build arbitrary maximum classes as
developed by Rubinstein \& Rubinstein~\cite{Rubinsteins09}---recall that a complete
collection is lifted by arbitrarily setting the `height' of components of
cubes that are connected without crossing the reduction
(\cf Sect.~\ref{sec:background}). The difference is that we iteratively filter out
intermediate maximum classes as soon as it is clear they cannot be embedded in
$\overline{C}$. 

\begin{proposition}
For any VC-$d$ class $C$ in the $n$-cube, and any $k>0$, Algorithm~\ref{alg:embed} returns
the set of all $(d+k)$-maximum classes in the $n$-cube containing $C$.
\end{proposition}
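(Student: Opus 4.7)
The plan is to show soundness and completeness of Algorithm~\ref{alg:embed} separately, in both cases leveraging the lifting characterization of maximum classes from Rubinstein \& Rubinstein~\cite{Rubinsteins09}. Passing to complements throughout, the task reduces to proving that the algorithm enumerates exactly those $(n-d-k-1)$-maximum classes contained in $\overline{C}$; their complements are then precisely the desired $(d+k)$-maximum classes containing $C$.

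The first step is to establish a loop invariant: after processing iteration $i$, the queue $\mathcal{M}$ holds precisely those classes $M$ that (i) are obtainable by a sequence of lifts along directions $1,\ldots,i$ starting from the initial closed-below $(n-d-k-1)$-maximum class, and (ii) whose projection $p_{\{1,\ldots,i\}}(M)$ is contained in $P = p_{\{1,\ldots,i\}}(\overline{C})$. The containment condition is exactly the filter checked at lines 12--16, while the enumeration over connected-component choices at lines 10--19 reproduces the lifting operation from Sect.~\ref{sec:cubical-bg}. Maintaining this invariant inductively is straightforward: the algorithm enumerates all feasible lifts of the current $M$ along coordinate $i$ by independently assigning each component to $x_i=0$ or $x_i=1$ (via the product $\mathcal{B} = \mathcal{B} \times S$), and prunes exactly those assignments violating projection-containment.

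For soundness, at termination ($i=n$) we have $P = \overline{C}$, so every $M \in \mathcal{M}$ satisfies $M \subseteq \overline{C}$. By the lifting characterization, $M$ is an $(n-d-k-1)$-maximum class, and therefore $\overline{M}$ is a $(d+k)$-maximum class in the $n$-cube containing $C$. For completeness, let $C^\star$ be any $(d+k)$-maximum class with $C \subseteq C^\star$; then $M^\star = \overline{C^\star}$ is an $(n-d-k-1)$-maximum class contained in $\overline{C}$. By the theorem of Rubinstein \& Rubinstein~\cite{Rubinsteins09}, $M^\star$ arises from a sequence of lifts $L_1,\ldots,L_n$ applied to the closed-below class along directions $1,\ldots,n$ in that order. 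One must verify that each intermediate class $M_i = L_i(\cdots L_1(\text{closed-below}))$ satisfies $p_{\{1,\ldots,i\}}(M_i) \subseteq P$; granted this, the invariant places $M_i$ in $\mathcal{M}$ at step $i$, so $M_n = M^\star$ is returned.

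The principal obstacle is this last step, which hinges on showing that a subsequent lift along a coordinate $j > i$ does not alter the projection onto coordinates $\{1,\ldots,i\}$. Concretely, the claim is $p_{\{1,\ldots,i\}}(M_i) = p_{\{1,\ldots,i\}}(M^\star)$, which follows because a lift along $j$ only re-selects the $x_j$-coordinate of each connected component of the $j$-reduction, leaving each vertex's other coordinates and the presence or absence of its $(j\ne\cdot)$-faces unchanged; thus the image under $p_{\{1,\ldots,i\}}$ is preserved. Combined with $M^\star \subseteq \overline{C}$, which gives $p_{\{1,\ldots,i\}}(M^\star) \subseteq P$, this ensures every intermediate $M_i$ passes the pruning filter, completing the proof. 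A small bookkeeping point is that the invariant implicitly uses uniqueness of the cubical representation of maximum classes (the corollary following Theorem~\ref{thm:reduction}), so that the lifting trajectory $L_1,\ldots,L_n$ is well-defined on each intermediate maximum class encountered.
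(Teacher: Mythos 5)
Your proposal is correct and follows essentially the same route as the paper's own (very terse) proof: both rest on the facts that lifting preserves the maximum property, that every maximum class arises from a lift sequence starting at the closed-below class, and that the pruning filter is exact because a lift along a coordinate $j>i$ does not change the projection onto $\{1,\ldots,i\}$. Your version simply makes the loop invariant and the soundness/completeness split explicit, which the paper leaves implicit.
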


\begin{proof}
The result follows from the maximum property being invariant to
lifting, lifting constructs all maximum classes of given
dimension~\cite{Rubinsteins09}, and that the algorithm filters out
exactly non-embedded classes as subsequent liftings do not alter the containment
property of earlier iterations.
\qed\end{proof}

\section{VC-$2$ Classes}\label{sec:VC-2}

We study VC-$2$ classes embedded in the binary $n$-cube, for $4 \le n \le 6$. We will prove some results on embedding of these VC-$2$ classes into maximum classes and also on the deficiency of maximal VC-$2$ classes.  Our choices of $d, n$ in this section yield the simplest ``complete picture'' for VC classes for which embedding (and compression) is non-trivial, and as such serve as useful tests for the tools developed above. In particular, we calculate the maximin VC dimension of the maximum classes in which maximal classes are embeddable, as summarised in Table~\ref{tab:vc2}.

\begin{table}
\caption{For $n\in\{4,5,6\}$, the smallest $d$'s such that all $2$-maximal classes in the $n$-cube embed in a $d$-maximum class, and some $2$-maximal class(es) does not embed in a $(d-1)$-maximum class.}
\label{tab:vc2}
\centering
\begin{tabular}{|c|c|}
\hline
$\mathbf n$ & \textbf{maximin $\mathbf d$ maximum-embeddable} \\
\hline
\; 4 \; & 3 \\
5 & 4 \\
6 & 4 \\
\hline
\end{tabular}
\end{table}

\textbf{Case $\mathbf{n=4}$.}
We first classify maximal VC-$2$ classes in the binary $4$-cube and prove these have deficiency $1$. As a corollary it follows that these classes project to maximum VC-$2$ classes in the binary $3$-cube. 

The argument is straightforward. The complement $\overline C$ of a maximal VC-$2$ class $C$ is a complete union of $1$-cubes \ie edges in the binary $4$-cube. Note that such a complete union is maximum if and only if it is a tree. In this case, $C$ too is maximum and so we are not interested in this (trivial) case. Consider then $\overline C$ a forest, with four edges. There are two possibilities: one is that there are two components of size $1,3$ and the other is that there are two components, each of size $2$. (We will verify that having three or more components is not possible). Notice that the components of this forest must be distance at least two apart. Since the diameter of the binary $4$-cube is $4$, it is easy to check that there cannot be three or more components and the two components are either a tree with a vertex of degree $3$ and a single edge, or two trees with two edges each. It is then straightforward to verify that up to symmetry of the 4-cube, there are precisely one of each type of forest. Hence there are precisely two maximal VC-$2$ classes in the binary $4$-cube and both have deficiency $1$. The latter holds since the forests both have one more vertex than a tree, corresponding to the complement of a maximum class. This completes the discussion in the $4$-cube. 

\textbf{Case $\mathbf{n=5}$.}
In the binary $5$-cube, there is a large number of possibilities for a maximal VC-$2$ class. However by our argument in the inembeddability section, it follows that there are VC-$2$ classes which do not embed in VC-$3$ maximum classes in the binary $5$-cube. Since a maximum VC-$4$ class is obtained by removing a single vertex from the binary $5$-cube, it follows immediately that every VC-$2$ class embeds in a maximum VC-$4$ class. But this is clearly a trivial result. 

\textbf{Case $\mathbf{n=6}$.}
Finally let's examine the more interesting case of VC-$2$ classes $C$ in the binary $6$-cube. We claim there is a simple argument that these all embed in maximum VC-$4$ classes. The idea is as usual, to study the complementary class $\overline C$. We can assume this is a complete union of $3$-cubes, by enlarging $C$ if necessary, but not increasing its VC dimension. Consider two such $3$-cubes $C_1, C_2$ with anchors at disjoint sets of coordinates $S_1, S_2$. Note that $C_1 \cap C_2$ contains the vertex $v$ with coordinate values at $S_1$ (respectively $S_2$) given by the anchor of $C_1$ (respectively $C_2$). Hence there is a tree $\Gamma$ embedded in $C_1 \cup C_2$ consisting of six edges one of each coordinate type, with three in $C_1$ all sharing $v$ and three in $C_2$ all containing $v$. But then the complementary class $\overline {\Gamma}$ is a maximum class of VC dimension $4$ containing $C$. (In fact it is easy to see that if the coordinates of the binary cube are flipped so all the coordinates of $v$ are $1$, then $\overline \Gamma$ is actually closed-below maximum.)

\section{Boolean Functions}\label{sec:functions}

Our aim in this section is to consider special VC-classes corresponding to Boolean functions and study their associated maximum classes. In Ehrenfeucht \etal~\cite{EHKV89}, Procaccia \& Rosenschein~\cite{monotone}, the learnability of examples of such classes are considered by way of computing VC dimensions. We will show that there are interesting connections between natural classes of Boolean functions and maximum classes, hence yielding information about compression schemes for such classes.  We begin with symmetric functions, showing the class can be enlarged to a maximum class of the same VC dimension. We then show that using a suitable basis of monomials, classes of Boolean functions can be formed by sums, which are maximum classes of arbitrary VC dimension.

\subsection{Symmetric Functions}

\begin{definition}
A function $f:\{0,1\}^n \to \{0,1\}$ is \emph{symmetric} if it has the same value when coordinates are permuted.
\end{definition}

We study the class of symmetric functions $\mathcal F \subset \{0,1\}^{\mathcal X}$ where $\mathcal X$ is the binary $n$-cube $\{0,1\}^n$. Each symmetric function $f: \mathcal X \to \{0,1\}$ is associated to the mapping given by $x \mapsto f(x)$ where $x \in \mathcal X$ is a binary $n$-vector. Clearly a symmetric function is completely determined by the number of coordinates with value $1$ which are in vectors mapped to $1$. 

We introduce some notation to assist the discussion. Coordinates in $\mathcal X$ will be the \emph{monomials} $\emptyset, x_1, \dots, x_n, x_1x_2, \dots, x_{n-1}x_n, \dots, x_1x_2, \dots, x_n$. Here the variable $x_i$ indicates a $1$ in the \ith{i}{th} location of a binary $n$-vector. We divide the coordinates into $n+1$ classes $S_0, S_1, \dots S_n$ so that each class consists of all monomials of the same degree (matching the class index). Then a symmetric function $f$ has the same value on all monomials in each class $S_i$. There are therefore $n+1$ \emph{degrees of freedom} of functions in $\mathcal F$. 

We prove the following result due to Ehrenfeucht \etal~\cite{EHKV89} via a novel argument that leverages the class's
natural structure under the above partitioned-monomial basis.

\begin{lemma}
The VC dimension of $\mathcal F$ is $n+1$.
\end{lemma}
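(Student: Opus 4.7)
The plan is to prove matching upper and lower bounds on the VC dimension separately, exploiting the partition of monomial coordinates into weight classes $S_0,\ldots,S_n$ that the paper has just set up.

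For the upper bound, I would observe that each symmetric function $f \in \mathcal{F}$ is completely specified by its $n+1$ values on $S_0, S_1, \ldots, S_n$, and these values may be chosen independently. This gives a bijection between $\mathcal{F}$ and $\{0,1\}^{n+1}$, so $|\mathcal{F}| = 2^{n+1}$. Shattering a set of $d$ points requires the restriction of $\mathcal{F}$ to that set to realise all $2^d$ labellings, and in particular requires $|\mathcal{F}| \geq 2^d$. Therefore $d \leq \log_2|\mathcal{F}| = n+1$, giving $\VC{\mathcal{F}} \leq n+1$.

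For the lower bound, I would exhibit an explicitly shattered set of size $n+1$. Pick any representative monomial $m_i \in S_i$ for each $i = 0,1,\ldots,n$ (e.g.\ $m_0 = \emptyset$, $m_1 = x_1$, $m_2 = x_1 x_2$, and so on up to $m_n = x_1 \cdots x_n$). Given any target labelling $b \in \{0,1\}^{n+1}$ of the points $m_0, \ldots, m_n$, define the symmetric function $f_b$ that takes the constant value $b_i$ on every monomial of $S_i$; this $f_b$ lies in $\mathcal{F}$ by the independence of the $n+1$ degrees of freedom, and by construction $f_b(m_i) = b_i$ for each $i$. Hence $\{m_0,\ldots,m_n\}$ is shattered and $\VC{\mathcal{F}} \geq n+1$.

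There is no serious obstacle: the entire argument rests on recognising that the partitioned-monomial parametrisation identifies $\mathcal{F}$ with $\{0,1\}^{n+1}$, after which the cardinality-based Sauer-type upper bound and the explicit construction of a shattered system of class-representatives both fall out immediately. The only thing to state carefully is that choosing one representative from each $S_i$ is always possible, which is automatic since each $S_i$ is non-empty for $0 \leq i \leq n$.
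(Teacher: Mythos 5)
Your proof is correct. The lower bound is essentially identical to the paper's: both shatter a transversal of the classes $S_0,\ldots,S_n$ by choosing the value of a symmetric function independently on each class. For the upper bound you diverge: you use the global cardinality bound $|\mathcal{F}| = 2^{n+1}$ together with the observation that a shattered $d$-set forces $|\mathcal{F}| \ge 2^d$, whereas the paper uses a local pigeonhole argument --- any $n+2$ coordinates must include two monomials in the same class $S_i$, and no symmetric function can assign them different values, so the pair (hence the set) is not shattered. Both are valid; your counting argument is arguably quicker, but the paper's version is the more informative one in context: it identifies exactly which pairs of coordinates cannot be separated, and this is precisely what drives the next lemma, which describes the complete collection of $(2^n-n-2)$-cubes in $\overline{\mathcal{F}}$ via anchors containing two same-class coordinates with differing values. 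The cardinality argument does not yield that structural information.
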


\begin{proof}
Using our basis of partitioned monomials, it is easy to see that the VC dimension of $\mathcal F$ is at least $n+1$. For we can choose symmetric functions which evaluate independently on each of our $n+1$ classes $S_i$ of monomials. Hence we see that $\mathcal F$ shatters a set $S$ of $n+1$ coordinates, so long as there is one coordinate from each class $S_i$ in $S$. On the other hand, it is also easy to see that there is no shattering of an $(n+2)$-set. For if we choose any collection of $n+2$ coordinates, then two of them have to be in the same class $S_i$. Hence every element of $\mathcal F$ does not distinguish these two coordinates, so shattering does not occur. This establishes that the VC dimension of $\mathcal F$ is exactly $n+1$.
\qed\end{proof}

Next, consider the collection of $(2^n-n-2)$-cubes in the complement $\overline {\mathcal F}$ of $\mathcal F$. We 
trivially have the following.

\begin{lemma}
The complement $\overline{\mathcal F}$ contains a complete collection of $(2^n-n-2)$-cubes with anchors having $n+2$  coordinates with at least two falling in the same $S_i$ class with differing values.
\end{lemma}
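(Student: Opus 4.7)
The plan is to construct the complete collection explicitly and verify containment in $\overline{\mathcal F}$ by exploiting the defining constraint of symmetric functions: any such function is constant on each class $S_i$.

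First, I would fix an arbitrary set $I$ of $n+2$ coordinates of the ambient $2^n$-cube. Since the coordinates of the $2^n$-cube are partitioned into the $n+1$ classes $S_0,\ldots,S_n$, by the pigeonhole principle there exist two coordinates $m_1, m_2 \in I$ and some index $i$ with $m_1, m_2 \in S_i$. I would then define the anchor over $I$ by assigning $m_1 \mapsto 0$, $m_2 \mapsto 1$, and arbitrary values on the remaining $n$ coordinates of $I$. This produces a $(2^n - n - 2)$-cube in the binary $2^n$-cube whose anchor has two coordinates lying in the same class $S_i$ with differing values, as desired.

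Next, I would check that this cube lies entirely in $\overline{\mathcal F}$. Any vertex $v$ of the cube satisfies $v_{m_1} = 0$ and $v_{m_2} = 1$; but if $v$ corresponded to a symmetric function $f \in \mathcal F$, then $f$ would have to evaluate identically on all monomials of degree $i$, in particular on $m_1$ and $m_2$. Thus no such $v$ can lie in $\mathcal F$, so the entire $(2^n - n - 2)$-cube is contained in $\overline{\mathcal F}$.

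Finally, since $I$ was an arbitrary $(n+2)$-subset of coordinates, this construction produces a $(2^n - n - 2)$-cube in $\overline{\mathcal F}$ for every such $I$, which is precisely a complete collection in the sense of Section~\ref{sec:cubical-bg}. There is no real obstacle here; the whole argument is pigeonhole plus the definition of symmetry. The result is consistent with Theorem~\ref{thm:complement} applied to $\mathcal F$ of VC dimension $n+1$ in the $2^n$-cube, and our construction gives a canonical witness for the complete collection guaranteed by that theorem.
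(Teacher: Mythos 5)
Your proof is correct and is exactly the argument the paper has in mind (the paper states this lemma without proof, as "trivial"): pigeonhole two anchor coordinates into the same class $S_i$, give them differing values so that every vertex of the resulting cube fails to be constant on $S_i$ and hence is non-symmetric, and do this for each choice of the $n+2$ anchor coordinates to get a complete collection. Nothing further is needed.
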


%Notice these have anchors of length $n+2$. Hence for each such cube, there are always at least two anchor coordinates in the same class $S_i$ for some $i$. To get an anchor of a cube in $\overline {\mathcal F}$, it is clear that two anchor coordinates in the same class $S_i$ must differ. This gives a necessary and sufficient condition for the anchors of such cubes. 

Finally we establish the following novel result on the maximum-embedding of the class of symmetric Boolean functions.

\begin{proposition}
There exists a maximum class of VC dimension $n+1$ containing $\mathcal F$.
\end{proposition}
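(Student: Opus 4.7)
The plan is to construct $\mathcal{M}$ explicitly by specifying its complement $\overline{\mathcal{M}}$ as a $(2^n-n-2)$-complete collection of cubes lying inside $\overline{\mathcal{F}}$, and to certify maximality via the iterated-reduction characterisation of Theorem~\ref{thm:reduction}. The key enabling observation is pigeonhole: any $(n+2)$-subset $T$ of the $2^n$ coordinates must contain two coordinates sharing a layer $S_i$, since there are only $n+1$ layers. Any anchor $a_T \in \{0,1\}^T$ that assigns opposite values to such a pair automatically forces the corresponding $(2^n-n-2)$-cube to lie inside $\overline{\mathcal{F}}$.

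Concretely, I would fix a canonical anchor rule: for each $T$, select the pair $(x_T,y_T)$ of same-layer coordinates in $T$ of minimum layer index (broken lexicographically), set $a_T(x_T)=0$ and $a_T(y_T)=1$, and assign the remaining $n$ bits by a canonical recipe such as all-zero. This rule is equivariant under the natural $\mathrm{Sym}(n)$-action permuting monomials within each $S_i$, matching the $\mathrm{Sym}(n)$-symmetry of $\mathcal{F}$. It yields a $(2^n-n-2)$-complete collection inside $\overline{\mathcal{F}}$ whose complement $\mathcal{M}$ contains $\mathcal{F}$ and has $\VC{\mathcal{M}} \le n+1$. By Proposition~\ref{prop:forest} each $n$-iterated reduction of $\mathcal{M}$ is already a forest, so by Theorem~\ref{thm:reduction} it suffices to verify every such reduction is connected. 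I would exploit $\mathrm{Sym}(n)$-equivariance to reduce to a small number of orbit representatives for the $n$-set $R$ of reduction directions, and within each orbit trace explicit paths between any two $(n+1)$-face vertices of the reduction graph, using how the canonical violating pair shifts as one traverses adjacent $(n+1)$-faces.

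The main obstacle is precisely this connectedness verification: the canonical rule must interlock the anchor choices across overlapping $T$'s so that the induced iterated reductions are genuine trees rather than proper forests. If the direct check proves too intricate, I would fall back on the lifting construction of Section~\ref{sec:cubical-bg}, starting from the closed-below $(n+1)$-maximum class and orchestrating a sequence of lifts so that at every stage each symmetric function whose restriction on the processed directions matches the current class survives; the $\mathrm{Sym}(n)$-symmetry of $\mathcal{F}$ again sharply constrains the admissible lift choices. Either route ultimately rests on the layer structure of $\{0,1\}^n$ as the decisive combinatorial ingredient.
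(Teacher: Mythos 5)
Your construction is essentially the paper's: the paper also fixes a degree-consistent ordering of the monomial coordinates and, for each $(n{+}2)$-set of anchor coordinates, sets all anchor bits to $0$ except for a single $1$ placed at the first coordinate (in the ordering) that has a same-class partner among the anchor coordinates; the resulting $(2^n{-}n{-}2)$-complete collection lies in $\overline{\mathcal F}$ by exactly your pigeonhole observation, and maximality is certified via Theorem~\ref{thm:reduction}. So the strategy matches. However, there are two gaps in your execution. First, and principally, the connectedness of the iterated reductions is the entire substance of the proof, and you leave it as a plan (``reduce to orbit representatives and trace explicit paths'') rather than carrying it out. The paper does this by a concrete two-case analysis on the complementary $(n{+}3)$-set $\overline S$ of non-projected coordinates: either exactly two of them lie in the first repeated class $S_i$, or at least three do; in each case one exhibits codimension-one faces shared between consecutive cubes of the reduction, using how the distinguished ``$1$'' coordinate migrates to the next repeated class when a coordinate of $S_i$ is dropped from the anchor. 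Your symmetry reduction is plausible but note that $\mathrm{Sym}(n)$-equivariance alone does not shrink the problem to a bounded number of cases without the further observation that only the multiset of layer-multiplicities of $\overline S$ matters; you would still need to produce the actual paths.

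Second, you apply Proposition~\ref{prop:forest} and Theorem~\ref{thm:reduction} to $\mathcal M$ itself, speaking of its ``$n$-iterated reductions.'' Both results require the class in question to be a complete union of cubes of the relevant dimension, and your construction only provides this for $\overline{\mathcal M}$ (a complete union of $(2^n{-}n{-}2)$-cubes); that $\mathcal M$ is a complete union of $(n{+}1)$-cubes is not known until \emph{after} maximality is established. The correct move, as in the paper, is to run the iterated-reduction test on $\overline{\mathcal M}$ along sets of $2^n{-}n{-}3$ directions, conclude $\overline{\mathcal M}$ is maximum, and then invoke the fact that the complement of a maximum class is maximum to conclude $\VC{\mathcal M}=n+1$ and $\mathcal F\subseteq\mathcal M$. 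This is easily repaired, but as written the logical order is circular.
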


\begin{proof}
Choose an ordering of the monomial coordinates of $\mathcal X$ consistent with their degrees. So if a monomial $m$ has larger degree  than a monomial $m^\prime$ then $m > m^\prime$ in the ordering. 

The complement $\overline {\mathcal M}$ of $\mathcal M$ is a complete collection of $(2^n - n-2)$-cubes with anchors of length $n+2$. We describe the set of anchors of these cubes. 

Each anchor has $n+1$ coordinates set equal to $0$ and a single coordinate equal to $1$. The special coordinate is defined as follows.

For every anchor, there must be at least two anchor coordinates in the same class $S_i$. Choose the first coordinate $m$ in the ordering in $S_i$ for any $i$, where there is a second anchor coordinate $m^\prime$ in $S_i$, and put the value of $m$ equal to $1$. This gives anchors of a complete collection of $(2^n - n-2)$-cubes. 

To show that $\overline {\mathcal M}$ is a maximum class, we study its iterated reductions. This involves a number of cases. 

\myparagraph{Case 1} 
Consider an iterated reduction of $\overline {\mathcal M}$, along a set $S$ of $2^n - n-3$ coordinates. Let $\overline S$ denote the complementary set of $n+3$ coordinates. In the first case, there  are two coordinates $m, m^\prime$ in ${\overline S} \cap S_i$, where $S_i$ the first class with more than one coordinate of $\overline S$ in the ordering . Then there must be at least two coordinates in ${\overline S} \cap S_j$ for $i \ne j$ and $S_j$ is the next class in the ordering containing more than one coordinate of $\overline S$. Each anchor for a cube in the iterated reduction along $S$ has $n+2$ coordinates, forming a set leaving out precisely one element of $\overline S$. There are two possibilities. The first is that the missing coordinate is not in $S_i$. It is easy to see that the set $\mathcal C$ of all such cubes overlap in pairs in codimension one faces. So it remains to consider what happens for the remaining cubes. Clearly there are two such cubes, say $C_1, C_2$. Both $C_1,C_2$ have a $0$ in the single remaining coordinate in $S_i$. Assume that the coordinate of $C_1$ in $S_i$ occurs before the coordinate of $C_2$ in $S_i$ in the ordering. $C_1,C_2$ also have a $1$ in $S_j$, since this now becomes the first class in the ordering where there are multiple anchor coordinates for the cubes. It is not difficult to see that $C_2$ has a codimension one face in common with a cube of $\mathcal C$. Moreover $C_1$ has a codimension one face in common with $C_2$. Hence it follows that the iterated reduction is a tree. 

\myparagraph{Case 2} 
Suppose that there are at least three coordinates of $\overline S$ in the first class $S_i$ in the ordering with more than one coordinate of $\overline S$ in $S_i$. It is not difficult to again enumerate cases and see that the cubes with anchors obtained from $\overline S$, by leaving out one of the coordinates of ${\overline S} \cap S_i$, have codimension one faces in common. Finally if we leave out one of the remaining coordinates of $\overline S$ , it is obvious that these cubes meet in pairs of codimension-one faces. Moreover it is easy to find a cube from the first family and one from the second which have a codimension-one face in common. So this completes the argument that $\overline {\mathcal M}$ is maximum and hence $\mathcal F$ embeds in $\mathcal M$, which is maximum of VC dimension $n+1$. 
\qed\end{proof}

\subsection{A Method for Generating Maximum Boolean Function Classes}

We next provide a method to generate interesting collections of Boolean functions which form maximum classes. We start with degree $n$ monomials in the binary $n$-cube. These are expressions of the form $a_1 \wedge a_2 \wedge \dots a_n$ where each $a_i$ is either $x_i$ or $\neg x_i$. We wish to find a collection $\mathcal B$ of Boolean functions, which is a maximum class of VC dimension $k$ in the binary $2^n$-cube. We begin with a \emph{generating set} for $\mathcal B$.
This is an ordered set $\mathcal G$ given by $2^n$ sums of distinct $n$-monomials, denoted $s_1, s_2, \dots s_{2^n}$:
\begin{itemize}
\item $s_1$ is any single monomial; and
\item Each subsequent $s_i$ has a unique representation as the sum of a single monomial and $s_j$ for some $j < i$.
\end{itemize}

The following is easy to verify.

\begin{lemma}
The set $\mathcal G \cup \{\emptyset\}$ is a maximum class of VC-dimension 1 in the $2^n$-cube, where $\mathcal G$ is a generating set $\{s_1, s_2, \ldots, s_{2^n}\}$ and $\emptyset$ is the zero Boolean function. 
\end{lemma}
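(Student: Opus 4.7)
The plan is to realise $\mathcal{G}\cup\{\emptyset\}$ as a spanning tree inside the one-inclusion subgraph of the $2^n$-cube, with each of the $2^n$ monomial coordinates appearing as the label of exactly one tree edge. A standard argument then identifies the class as maximum of VC dimension $1$.

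First I would verify cardinality: the $s_i$ are pairwise distinct and nonempty, and $\emptyset\notin\mathcal{G}$, so $|\mathcal{G}\cup\{\emptyset\}|=2^n+1=\binom{2^n}{0}+\binom{2^n}{1}$, which is Sauer's bound for the VC-$1$ case in the $2^n$-cube. Hence it suffices to bound the VC dimension from above by $1$.

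Next I would construct a tree $T$ on $\mathcal{G}\cup\{\emptyset\}$ by taking the edge $\{s_1,\emptyset\}$, labelled by the monomial $s_1$, together with, for each $i>1$, the edge $\{s_i,s_{j_i}\}$ labelled by $m_i$ coming from the unique decomposition $s_i=m_i+s_{j_i}$ with $j_i<i$. Tracing each $s_i$ back to $\emptyset$ through parent pointers shows that $T$ is connected; with $2^n+1$ vertices and $2^n$ edges it is a spanning tree, each of whose edges joins a pair at Hamming distance $1$ in the $2^n$-cube.

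The main obstacle lies in verifying that the $2^n$ labels of $T$ are all distinct monomials. Since the counts match, it is enough to rule out repeats: if two edges of $T$ shared a monomial label $m$, then by exchanging roles along the parallelogram they determine one produces an alternative representation of one of the $s_i$ as ``a monomial plus an earlier $s_k$'', contradicting the uniqueness clause. A short case analysis (split according to whether the root edge $\{s_1,\emptyset\}$ is among the repeats) is required, and the argument may need the uniqueness clause to be read as uniqueness over all of $\mathcal{G}\cup\{\emptyset\}$ rather than only over the strictly earlier initial segment.

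Finally I would invoke the following general observation: any class $C\subseteq\{0,1\}^N$ of size $N+1$ admitting a spanning tree $T$ in its one-inclusion subgraph that uses each of the $N$ coordinates as an edge label exactly once has VC dimension $1$. Indeed, for any pair of coordinates $\{i,j\}$ the unique $T$-path between two concepts flips each of $i,j$ at most once, so two concepts agree on $\{i,j\}$ precisely when that path avoids both labelled edges; removing two edges from a tree leaves three components, so $|p_{\{i,j\}}(C)|\leq 3 < 4$ and no pair is shattered. Combined with the cardinality count, this identifies $\mathcal{G}\cup\{\emptyset\}$ as a maximum class of VC dimension $1$.
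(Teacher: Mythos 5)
The paper offers no proof of this lemma (it is introduced with ``the following is easy to verify''), so there is nothing to compare against line by line; your overall architecture is the natural one, and two of its three steps are sound: the cardinality count $2^n+1=\binom{2^n}{0}+\binom{2^n}{1}$, and the deduction that a spanning tree in the one-inclusion graph whose $2^n$ edges carry pairwise distinct coordinate labels forces VC dimension $1$ (deleting the two edges labelled $i,j$ leaves at most three components, so $|p_{\{i,j\}}|\le 3$). The genuine gap is exactly the step you flag as the main obstacle: distinctness of the labels. The ``parallelogram'' argument cannot be completed from the stated hypotheses, and in fact the lemma is false under the literal reading of the definition. Concretely, for $n=2$ let $m_1,m_2,m_3,m_4$ denote the four degree-$2$ monomials and take $s_1=m_1$, $s_2=m_1+m_2$, $s_3=m_2$, $s_4=m_1+m_3$. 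Each $s_i$ with $i\ge 2$ has a unique representation as a single monomial plus an earlier $s_j$ (for $s_3$ the only such representation is $m_1+s_2$, since $s_3=m_2+\emptyset$ is not of the permitted form and $s_3+s_1=m_1+m_2$ is not a monomial), yet $\{\emptyset,s_1,s_2,s_3\}$ realises all four patterns on the coordinate pair $\{m_1,m_2\}$, so $\mathcal G\cup\{\emptyset\}$ has VC dimension $2$. In your tree this manifests as two edges labelled $m_1$, namely $\{\emptyset,s_1\}$ and $\{s_2,s_3\}$, and the ``alternative representation'' your exchange would produce is $s_3=m_2+\emptyset$, which the uniqueness clause as written does not forbid.

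The repair is the strengthening you anticipate: either require uniqueness of the representation over all bases in $\mathcal G\cup\{\emptyset\}$ (not just the earlier $s_j$), or, equivalently for the paper's purposes, require $\mathcal G$ to be linearly independent over $\mathbb Z_2$ --- a hypothesis the proof of the subsequent proposition invokes explicitly (``our choice of the generating set as linearly-independent functions''). Under linear independence the labels $m_i=s_i+s_{j_i}$ (with $s_{j_1}=\emptyset$) are the image of $s_1,\dots,s_{2^n}$ under a unitriangular change of basis, hence themselves a basis of $\mathbb Z_2^{2^n}$, hence pairwise distinct; with that in hand the remainder of your argument closes. You should state this extra hypothesis rather than leave it as a parenthetical possibility, since without it the claim you are proving is not true.
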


We now may build $\mathcal B$ by taking all sums of zero up to $k$ distinct elements from the set $\{s_1, s_2, \dots s_{2^n}\}$. It follows that $\mathcal B$ is maximum.

\begin{proposition}
$\mathcal B$ is maximum of VC dimension $k$.
\end{proposition}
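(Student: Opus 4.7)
The plan is to show $\mathcal B$ is a maximum VC-$k$ class in the $2^n$-cube by verifying (a) that $|\mathcal B|$ equals the Sauer bound $\sum_{i=0}^{k}\binom{2^{n}}{i}$, and (b) that $\mathrm{VC}(\mathcal B)\le k$. Given (a), Sauer's Lemma forces $\mathrm{VC}(\mathcal B)\ge k$, so combined with (b) we obtain $\mathrm{VC}(\mathcal B)=k$ with matching cardinality, which is precisely the definition of a maximum class.

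For (a), I would prove that the generating set $\{s_1,\ldots,s_{2^n}\}$ is $\mathbb F_2$-linearly independent by induction on $i$. If $s_i$ lay in the $\mathbb F_2$-span of $\{s_1,\ldots,s_{i-1}\}$, then combining this expression with the unique representation $s_i=m_i+s_{j_i}$ would produce a second, distinct decomposition of $s_i$ as a single monomial plus an earlier $s_j$, contradicting the uniqueness stipulated in the definition of a generating set. Linear independence forces the subset sums $\sum_{i\in I}s_i$ to be pairwise distinct as $I\subseteq[2^n]$ varies, and summing $\binom{2^n}{i}$ over $0\le i\le k$ yields (a).

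For (b), my approach is to apply the iterated-reduction characterisation of maximum classes (Theorem~\ref{thm:reduction}). The preceding lemma ensures that $\mathcal G\cup\{\emptyset\}$ is a maximum VC-$1$ class, hence a spanning tree of the one-inclusion graph of the $2^n$-cube in which each monomial labels a unique edge and every $s_i$ equals the $\mathbb F_2$-sum of edge-labels along the unique path from $\emptyset$ to $s_i$. Using this tree, for each $k$-subset $J$ of monomials I would exhibit a $k$-cube of $\mathcal B$ varying along $J$ by choosing a base vertex $f_J\in\mathcal B$ determined by the tree structure, so that $\mathcal B$ becomes a complete union of $k$-cubes; then verify that each $(k-1)$-iterated reduction is connected (i.e.\ a tree), either directly via the generating tree's combinatorics or inductively on $k$ by a lifting argument from Section~\ref{sec:background} under which a sub-generating-set structure descends to the reduction.

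The main obstacle is step (b). Although $\mathcal B$ is the image of the closed-below VC-$k$ maximum class (the weight-$\le k$ ball in $\mathbb F_2^{[2^n]}$) under the $\mathbb F_2$-linear bijection $T(v)=\sum_i v_is_i$, this bijection does not in general preserve axis-aligned cubical structure, so maximality is not formally inherited from the source. The $k$-cube decomposition of $\mathcal B$ and the tree-connectivity of its iterated reductions must instead be produced from the generating tree of $\mathcal G\cup\{\emptyset\}$ itself, where the technical core of the proof lies.
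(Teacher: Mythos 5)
Your step (a) is exactly the paper's argument: $\mathbb{Z}_2$-linear independence of $\{s_1,\ldots,s_{2^n}\}$ (which your induction on the unique representation $s_i=m_i+s_{j_i}$ establishes correctly) forces the subset sums to be pairwise distinct, giving $|\mathcal B|=\sum_{i=0}^{k}\binom{2^n}{i}$, after which Sauer's Lemma reduces everything to proving $\VC{\mathcal B}\le k$. The genuine gap is that you never prove this bound: you only outline a plan to exhibit $\mathcal B$ as a complete union of $k$-cubes and to check tree-connectivity of all its $(k-1)$-iterated reductions, and you yourself flag this as the unfinished ``technical core.'' As written, the proposal therefore does not establish the proposition. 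Moreover Theorem~\ref{thm:reduction} is not the right tool here: it presupposes that $\mathcal B$ is a complete union of $k$-cubes, which is itself unproven and not obviously true (as you correctly observe, the linear bijection from the weight-$\le k$ ball does not carry axis-aligned cubes to axis-aligned cubes), and even granting that decomposition, the theorem certifies maximality of a complete union rather than the upper bound $\VC{\mathcal B}\le k$ that your Sauer argument actually requires.

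The paper closes this gap with a short projection argument that bypasses cubical structure entirely. Fix any $k+1$ coordinates of the $2^n$-cube and project. The image of $\mathcal G\cup\{\emptyset\}$ is again a maximum VC-$1$ class $\mathcal C$ containing the origin (projections of maximum classes are maximum), and the $k+1$ nonzero vertices of this tree form a $\mathbb{Z}_2$-basis of $\{0,1\}^{k+1}$: ordering vertices by depth, each equals its parent plus a fresh unit vector, so the change-of-basis matrix is triangular. Since projection is $\mathbb{Z}_2$-linear, the image of $\mathcal B$ consists of sums of at most $k$ of these basis vectors, which can never equal the sum of all $k+1$ of them; hence no $(k+1)$-set is shattered and $\VC{\mathcal B}\le k$. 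Your observation that the one-inclusion tree of $\mathcal G\cup\{\emptyset\}$ encodes each $s_i$ as the sum of edge labels along a root path is precisely the triangularity fact needed here, so you had the right ingredient but aimed it at the wrong target; replacing your step (b) with this projection argument would complete the proof.
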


\begin{proof}
First, it is clear that the cardinality of $\mathcal B$ is $\sum_{i=0}^{k} {2^n \choose i}$. For if two sums are equal, then by Boolean addition, we obtain that a non-trivial sum is the zero function. But this is clearly impossible by our choice of the generating set as linearly-independent functions over $\mathbb Z_2$. So if we can prove that $\mathcal B$ has VC dimension at most $k$, by Sauer's Lemma it follows that $\mathcal B$ is maximum. 

Consider the projection of $\mathcal B$ to a $(k+1)$-cube. Notice that the projection of the generating set for $\mathcal B$ is a maximum VC-$1$ class $\mathcal C$ in this cube. Hence the projection of $\mathcal B$ consists of all sums of up to $k$ elements of $\mathcal C$. But a maximum VC-$1$ class $\mathcal C$ containing the origin $\tilde 0$ is easily seen to give a basis $\mathcal C \setminus \{\tilde 0\}$ for a binary cube considered as a $\mathbb Z_2$-vector space. Hence in the binary $(k+1)$-cube, the collection of all sums of up to $k$ elements from $\mathcal C \setminus \{\tilde 0\}$ clearly does not contain the element $c_1 +c_2 + \dots c_{k+1}$. Hence this shows the projection of $\mathcal B$ to any $(k+1)$-cube is not onto and so $\mathcal B$ is maximum as claimed.
\qed\end{proof}

\section{Conclusion}\label{sec:conc}

This chapter makes two main contributions. The first is a simple scheme to embed any VC-$d$ class into a maximum class of VC dimension $(d+D)$ where
$D$ is the deficiency. Therefore, for a collection of VC-$d$ classes in binary $n$-cubes, with $n$ increasing, so long as there is a bound on the deficiency of the classes independent of $n$, then the resulting compression scheme from embedding into VC-$(d+D)$ maximum classes satisfies the Sample Compression conjecture of Littlestone \& Warmuth. This focusses attention on maximal VC-$d$ classes, where the deficiency grows with the dimension $n$ of the binary cube. 

Our second main contribution is a negative embeddability result, placing a fundamental limit on the leading approach to
resolving the Sample Compression conjecture---an approach that requires the
embedding of general VC-$d$ classes into $O(d)$-maximum classes. We exhibit VC-$d$
classes that can be embedded into VC-$2d$ maximum classes but not into any VC-$(2d-1)$ maximum class.

We developed our negative result as an application of a generalised Sauer's Lemma,
proved first by Kuzmin \& Warmuth~\cite{KW07}, from bounding the number of points in
a concept class to bounding all hypercubes from edges to faces. We also offer a novel
proof of this result building on recent geometric characterisations as cubical
complexes~\cite{Rubinsteins09}. 

We believe that our negative examples may be close to worst possible. We offer a new iterated-reduction
characterisation that provides a practical approach to measuring whether a union of cubes
is maximum; and we develop an algorithm for building all maximum-embeddings of a given
VC-class. It is our hope that these three new tools may help in embedding all VC-$d$ classes
into maximum classes of dimension $O(d)$ but at least $2d$. As a first step we demonstrate their application to VC-$2$ classes in the 4,5,6-cubes, and also consider maximum-embeddings of classes of Boolean functions.

%the Springer BibTeX styles "spbasic.bst", "spmpsci.bst", "spphys.bst"
\bibliographystyle{spmpsci}
\bibliography{Rubinsteins-Bartlett-2014}

\begin{thebibliography}{10}
\providecommand{\url}[1]{{#1}}
\providecommand{\urlprefix}{URL }
\expandafter\ifx\csname urlstyle\endcsname\relax
  \providecommand{\doi}[1]{DOI~\discretionary{}{}{}#1}\else
  \providecommand{\doi}{DOI~\discretionary{}{}{}\begingroup
  \urlstyle{rm}\Url}\fi

\bibitem{AbrahamDFGW11}
Abraham, I., Delling, D., Fiat, A., Goldberg, A.V., Werneck, R.F.:
  {VC}-dimension and shortest path algorithms.
\newblock In: ICALP'11, pp. 690--699 (2011)

\bibitem{AngluinSurvey}
Angluin, D.: Computational learning theory: survey and selected bibliography.
\newblock In: STOC'92, pp. 351--369 (1992)

\bibitem{BartlettBook}
Anthony, M., Bartlett, P.L.: Neural Network Learning: Theoretical Foundations.
\newblock Cambridge University Press (1999)

\bibitem{BDL98}
Ben-David, S., Litman, A.: Combinatorial variability of {V}apnik-{C}hervonenkis
  classes with applications to sample compression schemes.
\newblock Discrete Applied Mathematics \textbf{86}(1), 3--25 (1998)

\bibitem{BEHW89}
Blumer, A., Ehrenfeucht, A., Haussler, D., Warmuth, M.: Learnability and the
  {V}apnik-{C}hervonenkis dimension.
\newblock Journal of the ACM \textbf{36}(4), 929--965 (1989)

\bibitem{SetCover95}
Br\"onnimann, H., Goodrich, M.T.: Almost optimal set covers in finite
  {VC}-dimension.
\newblock Discrete and Computational Geometry \textbf{14}(1), 463--479 (1995)

\bibitem{DevroyeBook}
Devroye, L., Gy\"orfi, L., Lugosi, G.: A Probabilistic Theory of Pattern
  Recognition.
\newblock Springer-Verlag (1996)

\bibitem{Doliwa2010}
Doliwa, T., Simon, H.U., Zilles, S.: Recursive teaching dimension, learning
  complexity, and maximum classes.
\newblock In: ALT'10, pp. 209--223 (2010)

\bibitem{EHKV89}
Ehrenfeucht, A., Haussler, D., Kearns, M.J., Valiant, L.G.: A general lower
  bound on the number of examples needed for learning.
\newblock Information and Computation \textbf{82}(3), 247--261 (1989)

\bibitem{F89}
Floyd, S.: Space-bounded learning and the {V}apnik-{C}hervonenkis dimension.
\newblock Technical Report TR-89-061, ICSI, UC Berkeley (1989)

\bibitem{Guruswami2010}
Guruswami, V., Hastad, J., Kopparty, S.: On the list-decodability of random
  linear codes.
\newblock In: STOC '10, pp. 409--416 (2010)

\bibitem{Haussler90probablyapproximately}
Haussler, D.: Probably approximately correct learning.
\newblock In: AAAI'90, pp. 1101--1108 (1990)

\bibitem{H95}
Haussler, D.: Sphere packing numbers for subsets of the boolean $n$-cube with
  bounded {V}apnik-{C}hervonenkis dimension.
\newblock Journal of Combinatorial Theory, Series A \textbf{69}, 217--232
  (1995)

\bibitem{HLW94}
Haussler, D., Littlestone, N., Warmuth, M.: Predicting $\{0,1\}$ functions on
  randomly drawn points.
\newblock Information and Computation \textbf{115}(2), 284--293 (1994)

\bibitem{HausslerWelzl86}
Haussler, D., Welzl, E.: Epsilon-nets and simplex range queries.
\newblock In: SOGC'86, pp. 61--71 (1986)

\bibitem{Kleinberg97}
Kleinberg, J.M.: Two algorithms for nearest-neighbor search in high dimensions.
\newblock In: STOC'97, pp. 599--608 (1997)

\bibitem{KW07}
Kuzmin, D., Warmuth, M.: Unlabeled compression schemes for maximum classes.
\newblock Journal of Machine Learning Research \textbf{8}(Sep), 2047--2081
  (2007)

\bibitem{Lang05}
Langford, J.: Tutorial on practical prediction theory for classification.
\newblock Journal of Machine Learning Research \textbf{6}(Mar), 273--306 (2005)

\bibitem{LW86}
Littlestone, N., Warmuth, M.: Relating data compression and learnability
  (1986).
\newblock Unpublished manuscript
  \url{http://www.cse.ucsc.edu/~manfred/pubs/lrnk-olivier.pdf}

\bibitem{honest}
Livni, R., Simon, P.: Honest compressions and their application to compression
  schemes.
\newblock In: COLT'13 (2013)

\bibitem{LBS04}
von Luxburg, U., Bousquet, O., Sch\"olkopf, B.: A compression approach to
  support vector model selection.
\newblock Journal of Machine Learning Research \textbf{5}, 293--323 (2004)

\bibitem{Matousek94}
Matou\v{s}ek, J.: Geometric range searching.
\newblock ACM Computing Surveys \textbf{26}(4), 421--461 (1994)

\bibitem{monotone}
Procaccia, A.D., Rosenschein, J.S.: Exact {V}{C} dimension of monotone
  formulas.
\newblock Neural Information Processing - Letters and Reviews \textbf{10}(7),
  165--168 (2006)

\bibitem{RBR08}
Rubinstein, B.I.P., Bartlett, P.L., Rubinstein, J.H.: Shifting: one-inclusion
  mistake bounds and sample compression.
\newblock Journal of Computer and System Sciences: Special Issue on Learning
  Theory 2006 \textbf{75}(1), 37--59 (2009)

\bibitem{RR08}
Rubinstein, B.I.P., Rubinstein, J.H.: Geometric \& topological representations
  of maximum classes with applications to sample compression.
\newblock In: COLT'08, pp. 299--310 (2008)

\bibitem{Rubinsteins09}
Rubinstein, B.I.P., Rubinstein, J.H.: A geometric approach to sample
  compression.
\newblock Journal of Machine Learning Research \textbf{13}(Apr), 1221--1261
  (2012)

\bibitem{S72}
Sauer, N.: On the density of families of sets.
\newblock Journal of Combinatorial Theory, Series A \textbf{13}, 145--147
  (1972)

\bibitem{SS72}
Shelah, S.: A combinatorial problem; stability and order for models and
  theories in infinitary languages.
\newblock Pacific Journal of Mathematics \textbf{41}(1), 247--261 (1972)

\bibitem{Trudeau94}
Trudeau, R.J.: Introduction to Graph Theory.
\newblock Dover (1994)

\bibitem{vanDerVaartBook}
van~der Vaart, A.W., Wellner, J.A.: Weak Convergence and Empirical Processes.
\newblock Springer (1996)

\bibitem{VC71}
Vapnik, V.N., Chervonenkis, A.Y.: On the uniform convergence of relative
  frequencies of events to their probabilities.
\newblock Theory of Probability and its Applications \textbf{16}(2), 264--280
  (1971)

\bibitem{W03}
Warmuth, M.K.: Compressing to {VC} dimension many points.
\newblock In: COLT'03 (2003)

\bibitem{W87}
Welzl, E.: Complete range spaces (1987).
\newblock Unpublished notes

\bibitem{Welzl87}
Welzl, E., W\"oginger, G.: On {V}apnik-{C}hervonenkis dimension one (1987).
\newblock Unpublished notes

\end{thebibliography}

\end{document}